\newcolumntype{L}[1]{>{\raggedright\arraybackslash}m{#1}}
\newcolumntype{C}[1]{>{\centering\arraybackslash}m{#1}}
\newcolumntype{R}[1]{>{\raggedleft\arraybackslash}m{#1}}
\algrenewcommand\algorithmicrequire{\textbf{Require:}}
\algrenewcommand\algorithmicensure{\textbf{Ensure:}}
\newcommand{\boldtheorem}[1]{\textbf{#1}}
\newtheorem{theorem}{\boldtheorem{Theorem}}
\newtheorem{lemma}{\boldtheorem{Lemma}}
\title{A Multi-Scale Graph Neural Process with Cross-Drug Co-Attention for Drug-Drug Interactions Prediction}
\author{
{ Zimo Yan\textsuperscript{\rm 1}, Zheng Xie\textsuperscript{\rm 1}\thanks{*Corresponding author: Zheng Xie (xiezheng81@nudt.edu.cn)}, Jie Zhang\textsuperscript{\rm 1}, Song Yiping\textsuperscript{\rm 1}, Li Hao\textsuperscript{\rm 1}. }
\vspace{1.6mm}\\
\fontsize{10}{10}\selectfont\itshape
\textsuperscript{\rm 1}National University of Defense Technology, Changsha, China.
\\\{yanzimo20, xiezheng81, zhangjie, songyiping, lihao22}
\begin{document}

\maketitle

\begin{abstract}
Accurate prediction of drug-drug interactions (DDI) is crucial for medication safety and effective drug development. However, existing methods often struggle to capture structural information across different scales—from local functional groups to global molecular topology—and typically lack mechanisms to quantify prediction confidence. To address these limitations, we propose MPNP-DDI, a novel Multi-scale Graph Neural Process framework. The core of MPNP-DDI is a unique message-passing scheme that, by being iteratively applied, learns a hierarchy of graph representations at multiple scales. Crucially, a cross-drug co-attention mechanism then dynamically fuses these multi-scale representations to generate context-aware embeddings for interacting drug pairs, while an integrated neural process module provides principled uncertainty estimation. Extensive experiments demonstrate that MPNP-DDI significantly outperforms state-of-the-art baselines on benchmark datasets. By providing accurate, generalizable, and uncertainty-aware predictions built upon multi-scale structural features, MPNP-DDI represents a powerful computational tool for pharmacovigilance, polypharmacy risk assessment, and precision medicine.
\end{abstract}


\section{Introduction}
The concurrent use of multiple medications, known as polypharmacy, is increasingly common, elevating the risk of adverse drug events stemming from unforeseen drug-drug interactions \cite{hines2021trends,gottlieb2020polypharmacy}. To mitigate these risks, predicting these interactions serves as a cornerstone of pharmacovigilance and clinical decision support. However, the challenge is immense, as the number of potential DDIs grows combinatorially with the number of available drugs, making exhaustive experimental screening infeasible \cite{ryu2018deep}. This reality underscores the critical importance of developing accurate and scalable computational models to forecast DDI risks preemptively.

Initial computational approaches for DDI prediction relied heavily on literature mining to extract known interactions from biomedical texts \cite{tari2010nalgene}, or similarity-based methods that assume drugs with similar properties (e.g., chemical structure, target proteins) are likely to share similar interaction profiles \cite{gottlieb2012predicting}. In recent years, Graph Neural Networks (GNNs) have emerged as the state-of-the-art for learning from molecular data \cite{gilmer2017neural}. When applying GNNs to the DDI problem, which inherently involves a pair of drugs, the dual-GNN architecture has become a common paradigm. In this setup, two separate GNNs process the paired drugs independently, and their final embeddings are concatenated for prediction \cite{feng2020n,deac2023drug}.

Beyond these foundational models, emerging strategies are tackling the DDI prediction problem with greater complexity. Knowledge graph-based methods embed drugs within a larger biomedical network, incorporating heterogeneous information such as proteins, diseases, and side effects to enrich drug representations \cite{zitnik2018modeling,lin2020kgnn}. In parallel, multi-modal approaches aim to fuse diverse data sources, such as molecular structures and textual descriptions, to create more comprehensive drug profiles \cite{deng2020multi}. Other advanced models have begun to incorporate co-attention mechanisms to model substructure-level interactions \cite{Ma2023dual}.

Despite these advancements, a fundamental limitation persists, as many models are built upon standard GNNs that operate at a single, fixed analytical scale. This prevents them from simultaneously capturing fine-grained local substructures and global molecular topology, often seeing the trees but not the forest \cite{alon2021on}, while also lacking a mechanism to dynamically focus on the most salient chemical motifs \cite{velivckovic2018graph}. This architectural scale-insensitivity leads to a more profound conceptual flaw: the generation of static, context-agnostic drug representations. In prevalent dual-GNN pipelines, the representation of Drug A is computed in an "information silo," entirely independent of its partner, Drug B \cite{feng2020n,deac2023drug}. This approach is fundamentally misaligned with chemical reality, where a drug's interactive potential is dynamic and context-dependent. A truly effective model must therefore first perceive features across multiple scales to then generate a dynamic, context-aware representation that reflects how these features are expressed in the presence of a specific partner \cite{Ma2023dual}.

\textbf{Motivation}:
This raises the following question: \textit{Can we design a DDI prediction model that moves beyond static, single-scale feature extraction and instead learns dynamic, context-aware representations from a rich hierarchy of multi-scale features, while also quantifying the prediction's reliability?} To address this challenge, we introduce the  Multi-scale Graph Neural Process for DDI (MPNP-DDI), 
as illustrated in Figure~\ref{fig:model_architecture}, our framework addresses the single-scale issue by using stacked GNP blocks to extract a hierarchy of features, and breaks the "information silo" by dynamically fusing these features with a cross-drug co-attention mechanism. This interactive process yields a context-aware representation for predicting both the DDI event and the model's uncertainty.
\begin{figure*}[t!]
    \centering
    \includegraphics[width=\textwidth]{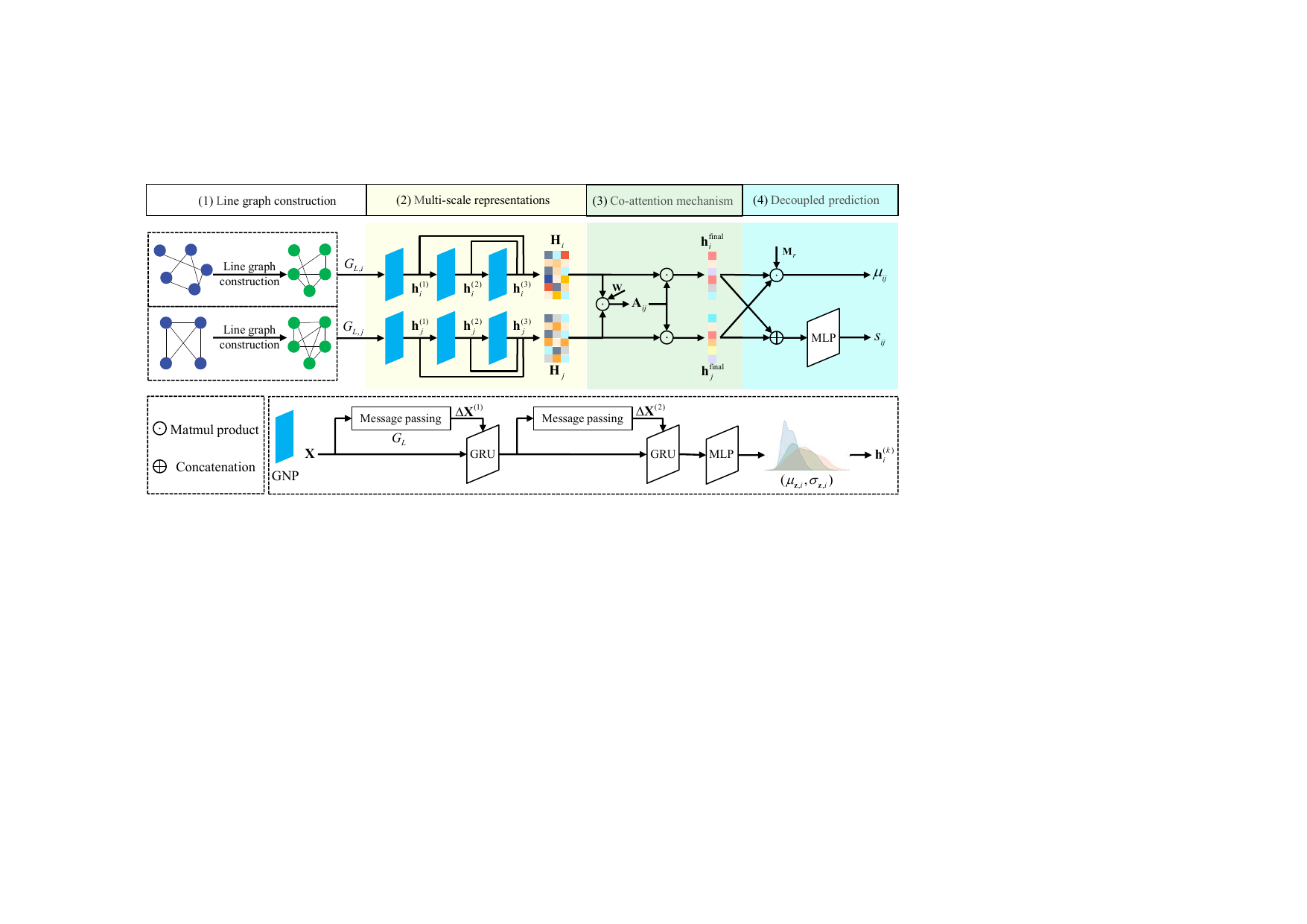} 
    \caption{The architecture of MPNP-DDI. Stacked GNP blocks generate multi-scale representations for each drug, which are then fused by a cross-drug co-attention mechanism to enable context-aware, probabilistic DDI prediction.}
    \label{fig:model_architecture}
\end{figure*}

\textbf{Primary contributions.} The primary contributions of this work are as follows:
\begin{enumerate}
    \item We propose a multi-scale framework for DDI prediction, first extracts a hierarchy of representations using stacked GNP blocks and then learns context-aware embeddings by dynamically fusing these features via a cross-drug co-attention mechanism.
    \item We integrate an uncertainty estimation module to quantify the model's confidence for each prediction, enhancing its reliability and applicability in clinical settings.
    \item We demonstrate the effectiveness of our multi-scale, interactive approach through extensive experiments, achieving highly competitive performance on key DDI prediction benchmarks.
\end{enumerate}

\section{Literature Review}
\label{sec:literature_review}

This section provides a review of existing computational methods for DDI prediction, organized into four key areas that reflect the evolution of the field. We begin with traditional and similarity-based approaches, then delve into the now-dominant Graph Neural Network-based models. Subsequently, we discuss methods that integrate richer external information, and finally, we highlight the emerging trend towards more advanced graph models and the critical need for uncertainty quantification.

\textbf{Foundational Computational Methods.}
The foundational computational approaches for DDI prediction were primarily built upon two pillars: literature mining and similarity-based inference. Literature mining employs Natural Language Processing (NLP) techniques to automatically extract known DDI pairs from vast biomedical text corpora, such as scientific articles and clinical reports \cite{tari2010nalgene, percha2012discovery}. While effective for cataloging documented interactions, this strategy is inherently unable to predict novel, previously unobserved DDIs.

In parallel, similarity-based methods operate on the widely held pharmacological principle that similar drugs are likely to exhibit similar interaction behaviors \cite{gottlieb2012predicting}. These models represent drugs using a variety of feature vectors—including 2D/3D chemical fingerprints, target protein profiles, and side effect signatures \cite{vilar2012drug, cheng2013machine}—and infer the interaction probability for a new pair based on its similarity to known interacting pairs. The primary limitation of this approach is its heavy reliance on hand-crafted features and the "similarity assumption," which may not always hold true, as structurally dissimilar drugs can sometimes trigger similar pathways.

\textbf{Knowledge Graph and Multi-Modal Approaches.}
To move beyond molecular structure alone, another line of research has focused on integrating richer, heterogeneous information. Knowledge Graph (KG) based methods have gained significant traction by embedding drugs into large-scale biomedical networks \cite{zitnik2018modeling}. These KGs connect drugs to other entities like proteins, genes, diseases, and side effects, allowing models to learn drug representations enriched with relational context from the broader biological landscape \cite{lin2020kgnn, yao2021tri}.

Concurrently, multi-modal approaches aim to create a more holistic drug profile by fusing data from diverse sources. These models combine molecular structures with other data modalities, such as textual descriptions from drug labels, gene expression profiles, and clinical data, to predict DDIs \cite{deng2020multi, zhang2023deep}. While powerful, both KG and multi-modal methods are highly dependent on the availability, quality, and completeness of external databases, which can be noisy, sparse, or suffer from inherent biases. This dependency can limit their generalization and highlights the value of methods that can extract maximal information from the most fundamental data source: the molecular structure itself.

\textbf{Graph Neural Network-Based DDI Prediction.}
The advent of Graph Neural Networks (GNNs) marked a paradigm shift, enabling models to learn representations directly from the raw molecular graph structure \cite{gilmer2017neural}. In the context of DDI prediction, this led to the prevalence of the \textbf{dual-GNN architecture}. In this paradigm, two GNNs (or a single shared-weight GNN) independently process the two drug molecules to generate fixed-size embeddings, which are then concatenated and fed into a classifier to predict the interaction type \cite{feng2020n, deac2023drug}. 

Building on this foundation, subsequent works have sought to improve the feature extraction process. For instance, SSI-DDI \cite{yu2021ssi} utilized graph attention layers to focus on important substructures, while GMPNN-CS \cite{liu2022gmpnn} introduced a gated message passing mechanism to capture substructures of varying sizes. More recently, DGNN-DDI \cite{Ma2023dual} incorporated a co-attention mechanism to weigh the importance of substructure interactions. However, a common limitation persists: these methods still largely operate on static representations. The co-attention is often applied as a late-stage fusion step on pre-computed, context-agnostic features, failing to fully resolve the "information silo" problem and model the dynamic nature of drug interactions at the feature learning level.

\textbf{Advanced Graph Models and Uncertainty Quantification.}
The limitations of standard GNNs have motivated the exploration of more advanced graph learning frameworks. Graph Neural Processes (GNPs) represent a promising frontier \cite{garnelo2018conditional}. Unlike conventional GNNs, GNPs learn a distribution over functions on graphs. This probabilistic approach yields two significant benefits for DDI prediction. First, it makes GNPs inherently suitable for few-shot generalization; their underlying meta-learning structure is particularly promising for adapting to new drugs or rare DDI types, which are common challenges in real-world datasets \cite{kim2019attentive}.

Second, and most critically, GNPs provide a principled mechanism for uncertainty quantification. The vast majority of existing DDI models provide deterministic outputs, offering no indication of prediction reliability. In safety-critical domains like medicine, this is a significant shortcoming. A model that can express its own uncertainty would be transformative: low-confidence predictions could alert clinicians to the need for cross-validation, while high-uncertainty predictions on novel pairs could guide experimental research by pinpointing the boundaries of current knowledge. Despite this clear potential, the application of GNPs to DDI prediction remains largely unexplored, highlighting a critical gap that this work aims to address.

\section{Preliminaries}
\label{sec:preliminaries}
This section introduces the fundamental notations for representing drugs as graphs and formally defines the task of probabilistic DDI prediction.
\subsection{Notations}
We model each drug as a molecular graph and formulate the DDI prediction task as learning a mapping from a pair of graphs to a probabilistic output.

A single drug molecule is represented as a graph $G=(V, E)$, where $V$ is the set of atoms (nodes) and $E$ is the set of chemical bonds (edges). Each node $v \in V$ is associated with an initial feature vector $\mathbf{x}_v \in \mathbb{R}^{d_v}$, encoding atomic properties (e.g., atom type, degree, formal charge). Similarly, each edge $e_{uv} \in E$ connecting nodes $u$ and $v$ has a feature vector $\mathbf{e}_{uv} \in \mathbb{R}^{d_e}$, representing bond properties (e.g., bond type, conjugation).

To facilitate the edge-to-edge message passing scheme central to our model, we also construct the line graph $G_L=(V_L, E_L)$ for each molecular graph $G$. In the line graph, each node $v' \in V_L$ corresponds to an edge in the original graph $G$. An edge $e'_{v'u'} \in E_L$ exists between two nodes $v'$ and $u'$ if their corresponding bonds in $G$ share a common atom.

In a multi-scale setting, we generate a set of hierarchical representations for each drug. For a drug $d_i$, its multi-scale representation is denoted as $\mathbf{H}_i = \{\mathbf{h}_{i}^{(k)}\}_{k=1}^{K}$, where $\mathbf{h}_{i}^{(k)} \in \mathbb{R}^{d_h}$ is the graph-level embedding at scale $k$, and $K$ is the total number of scales (i.e., the number of GNP blocks).

\begin{table}[h!]
\centering
\caption{Summary of key notations.}
\label{tab:notations}
\begingroup 
\footnotesize 
\setlength{\tabcolsep}{4pt} 

\begin{tabular}{ll}
\toprule
\textbf{Notation} & \textbf{Description} \\
\midrule
$G_i, G_j$ & Molecular graphs for a drug pair. \\
$\mathbf{x}_v \in \mathbb{R}^{d_v}$ & Atom (node) feature vector. \\
$\mathbf{e}_{uv} \in \mathbb{R}^{d_e}$ & Bond (edge) feature vector. \\
$G_{L,i}$ & Line graph derived from $G_i$. \\
\addlinespace 
$\mathbf{h}_{i}^{(k)}$ & Graph embedding of drug $d_i$ at scale $k$. \\
$\mathbf{H}_i$ & Set of multi-scale embeddings for a drug. \\
\addlinespace
$\mathcal{T}_{\text{train}}$ & Training set of labeled drug pairs. \\
$y_{ij}$ & Binary interaction label (1=yes, 0=no). \\
$f_\theta(\cdot, \cdot)$ & The DDI prediction model with parameters $\theta$. \\
\addlinespace
$(\mu_{ij}, s_{ij})$ & Model output: interaction logit and log-variance. \\
$\sigma_{ij}^2$ & Estimated variance, derived from $s_{ij}$. \\
\addlinespace
$\mathcal{L}_{\text{MPNP}}$ & The total loss function. \\
\begin{tabular}{@{}l@{}}$\mathcal{L}_{\text{pred}}, \mathcal{L}_{\text{unc}},$ \\ $\mathcal{L}_{\text{kl}}$\end{tabular} 
& Prediction, uncertainty, and KL loss components. \\
$\lambda_{\text{unc}}, \lambda_{\text{kl}}$ & Weights for the loss components. \\
\bottomrule
\end{tabular}
\endgroup 
\end{table}

\subsection{Problem Statement}
We address the task of DDI prediction as a probabilistic binary classification problem on graph pairs. The model must not only predict the likelihood of an interaction but also quantify the confidence in its prediction.

Formally, given a training set of drug graph pairs with interaction labels:
\[
\mathcal{T}_{\text{train}} = \{(G_i, G_j, y_{ij})\}_{i,j},
\]
where $G_i$ and $G_j$ are the molecular graphs for a pair of drugs and $y_{ij} \in \{0, 1\}$ is the ground-truth binary label, the objective is to learn a probabilistic predictive function $f_\theta$. This function maps a pair of graphs to a predictive distribution, which we parameterize by its logit and log-variance:

\begin{equation}
f_\theta: (G_i, G_j) \mapsto (\mu_{ij}, s_{ij})
\label{eq:mapping_function}
\end{equation}
Here, $\mu_{ij} \in \mathbb{R}$ is the predicted logit for the interaction, and $s_{ij} \in \mathbb{R}$ is the predicted log-variance. The variance $\sigma_{ij}^2 = \exp(s_{ij})$ serves as the model's uncertainty for the prediction.

The model is trained by minimizing a composite loss function, $\mathcal{L}_{\text{MPNP}}$, which balances predictive accuracy with calibrated uncertainty estimation and regularization. This loss function is composed of a prediction term, an uncertainty regularization term, and a KL divergence term:
\begin{equation}
\mathcal{L}_{\text{MPNP}} = \mathcal{L}_{\text{pred}} + \lambda_{\text{unc}} \mathcal{L}_{\text{unc}} + \lambda_{\text{kl}} \mathcal{L}_{\text{kl}}
\label{eq:total_loss}
\end{equation}
where $\lambda_{\text{unc}}$ and $\lambda_{\text{kl}}$ are hyperparameters that control the influence of their respective loss components.

The primary component is the prediction loss, $\mathcal{L}_{\text{pred}}$, which drives the model's accuracy. It uses the standard binary cross-entropy with logits formulation:
\begin{equation}
\mathcal{L}_{\text{pred}} = \mathbb{E}_{(G_i, G_j, y_{ij})} \left[ -y_{ij}\mu_{ij} + \log(1 + \exp(\mu_{ij})) \right]
\label{eq:pred_loss}
\end{equation}

The second component, the uncertainty loss $\mathcal{L}_{\text{unc}}$, regularizes the model's confidence. It is designed to penalize predictions that are both incorrect and highly confident. This is achieved by scaling the squared prediction error by the inverse of the model's estimated variance, and adding a regularization term on the variance itself:
\begin{equation}
\mathcal{L}_{\text{unc}} = \mathbb{E}_{(G_i, G_j, y_{ij})} \left[ (\text{sigmoid}(\mu_{ij}) - y_{ij})^2 \exp(-s_{ij}) + s_{ij} \right]
\label{eq:unc_loss}
\end{equation}
where $\text{sigmoid}(\cdot)$ is the sigmoid function. The first term, $(\text{sigmoid}(\mu_{ij}) - y_{ij})^2 / \sigma_{ij}^2$, forces the model to learn a high variance (high uncertainty) for incorrect predictions to avoid a large loss. The second term, $s_{ij} = \log(\sigma_{ij}^2)$, acts as a regularizer to prevent the model from trivially predicting infinite variance for all samples.

The third component, $\mathcal{L}_{\text{kl}}$, represents a Kullback-Leibler (KL) divergence term that regularizes the latent space of the graph embeddings, promoting a more robust and generalizable representation.

At inference time, for a given drug pair $(G_i, G_j)$, the model outputs the probability of interaction $p_{ij} = \text{sigmoid}(\mu_{ij})$ and the uncertainty score $\sigma_{ij}^2 = \exp(s_{ij})$. A high $\sigma_{ij}^2$ indicates that the prediction is less reliable and should be treated with caution, a critical feature for clinical applications.

\section{Method}
\label{sec:method}

We develop MPNP-DDI (Multi-scale Graph Neural Process for DDI), a novel framework designed for probabilistic drug-drug interactions prediction directly from molecular graphs. Unlike traditional models that generate static, context-agnostic embeddings, MPNP-DDI is architected to produce dynamic, context-aware representations that reflect the specific chemical environment of a given drug pair. The core of our framework rests on four key innovations: (1) a novel \textbf{edge-to-edge message passing scheme} on the line graph to capture high-order topological information; (2) a hierarchical feature extractor composed of stacked \textbf{Graph Neural Process (GNP) blocks} that leverage this scheme to learn a distribution over multi-scale representations; (3) a \textbf{cross-drug co-attention mechanism} to dynamically fuse these multi-scale features; and (4) a \textbf{decoupled prediction head} that simultaneously outputs an interaction score and its corresponding uncertainty.

By explicitly learning how the structural features of one drug are perceived in the presence of another, MPNP-DDI moves beyond simple pattern recognition towards a more nuanced, relational understanding of drug-drug interactions.

\subsection{Edge-to-Edge Message Passing for High-Order Structures}
\label{subsec:edge_message_passing}
Our first innovation is a message passing scheme that operates between chemical bonds rather than atoms to capture high-order structural motifs. To facilitate this, we first represent each drug molecule $d_i$ as a standard graph $G_i=(V_i, E_i)$, where atoms are nodes and bonds are edges, with initial features $\mathbf{x}_v$ and $\mathbf{e}_{uv}$. These raw features are projected into a unified hidden dimension $d_h$ to yield initial hidden states $\mathbf{x}_v^{(0)}$ and $\mathbf{e}_{uv}^{(0)}$:
\begin{equation}
\mathbf{x}_v^{(0)} = \text{PReLU}(\text{BatchNorm}(\phi_v(\mathbf{x}_v))), \quad \mathbf{e}_{uv}^{(0)} = \phi_e(\mathbf{e}_{uv})
\label{eq:preprocessing}
\end{equation}
The key to our scheme is the construction of the \textbf{line graph} $G_{L,i}=(V_{L,i}, E_{L,i})$. In the line graph, each node in $V_{L,i}$ corresponds to a bond in the original graph $E_i$. An edge connects two nodes in the line graph if their corresponding bonds in $G_i$ are incident to the same atom.

Upon this structure, the message passing process unfolds. For each bond $e_{uv}$, an initial message $\mathbf{m}_{uv}$ is created by combining its features with those of its incident nodes:
\begin{equation}\label{eq:msg_init}
    \mathbf{m}_{uv} = \mathbf{e}_{uv}^{(t)} + \frac{1}{2}(\mathbf{x}_u^{(t)} + \mathbf{x}_v^{(t)})
\end{equation}
These messages are then refined by aggregating information from neighboring bonds via the line graph connectivity. The updated message $\mathbf{m}'_{uv}$ is computed as:
\begin{equation}\label{eq:msg_agg}
    \mathbf{m}'_{uv} = \mathbf{m}_{uv} + \sum_{e_{jk} \in \mathcal{N}_L(e_{uv})} \mathbf{m}_{jk}
\end{equation}
where $\mathcal{N}_L(e_{uv})$ are the neighboring bonds of $e_{uv}$ in the line graph. These refined bond-level messages $\mathbf{m}'$ form the basis for updating the atom representations in subsequent stages.

\subsection{Multi-Scale Stochastic Representation Learning via GNP Blocks}
\label{subsec:gnp_blocks}
The second innovation is a stack of $K$ Graph Neural Process (GNP) blocks that generate multi-scale stochastic representations. Each block uses the refined bond messages $\mathbf{m}'$ from the previous stage to update node representations and then defines a distribution over graph-level embeddings for a specific structural "scale". This involves two main steps: a recurrent state update and a stochastic graph readout.

\textbf{Recurrent State Update.} The refined edge messages are scattered back to the nodes. The aggregated update signal for a node $v$ is $\Delta \mathbf{x}_v^{(t)} = \sum_{u \in \mathcal{N}(v)} \mathbf{m}'_{uv}$. To model the dynamics of this iterative process over $T$ steps within a block, we employ a Gated Recurrent Unit (GRU) \cite{cho2014learning}:
\begin{equation}\label{eq:gru_update}
    \mathbf{x}_v^{(t+1)} = \text{GRU}(\Delta \mathbf{x}_v^{(t)}, \mathbf{x}_v^{(t)})
\end{equation}

\textbf{Stochastic Graph Readout.} After $T$ iterations, the final node representations $\mathbf{X}^{(k)}$ are used to parameterize a latent distribution. A deterministic summary vector $\mathbf{s}_i^{(k)}$ is first obtained via attention pooling, which is then projected to produce the mean $\boldsymbol{\mu}_{\mathbf{z},i}^{(k)}$ and log-variance $\boldsymbol{\sigma}_{\mathbf{z},i}^{(k)}$ of a diagonal Gaussian posterior $q(\mathbf{z}_i^{(k)} | G_i)$. The multi-scale representation for drug $d_i$ is a set of samples $\mathbf{H}_i = \{\mathbf{h}_i^{(1)}, \dots, \mathbf{h}_i^{(K)}\}$ drawn from these distributions, regularized by a KL divergence loss $\mathcal{L}_{\text{kl}}$ against a standard normal prior.

\subsection{Dynamic Interaction Modeling with Co-Attention}
\label{subsec:co_attention}
The third innovation is a co-attention mechanism that dynamically fuses the multi-scale representations $\mathbf{H}_i$ and $\mathbf{H}_j$ for a drug pair $(d_i, d_j)$, generating context-aware embeddings. First, a cross-scale affinity matrix $\mathbf{A} \in \mathbb{R}^{K \times K}$ is computed:
\begin{equation}\label{eq:affinity}
    A_{kl} = (\mathbf{h}_i^{(k)})^\top \mathbf{W} \mathbf{h}_j^{(l)}
\end{equation}
where $\mathbf{W} \in \mathbb{R}^{d_h \times d_h}$ is a learnable matrix. From $\mathbf{A}$, we derive attention weights $\boldsymbol{\alpha}_i$ and $\boldsymbol{\alpha}_j$ for each drug by applying a softmax function to the mean-pooled columns and rows, respectively. The final context-aware embedding for each drug is a weighted sum of its multi-scale features:
\begin{equation}\label{eq:final_embedding}
    \mathbf{h}_i^{\text{final}} = \sum_{k=1}^{K} \alpha_{ik} \mathbf{h}_i^{(k)}, \quad \mathbf{h}_j^{\text{final}} = \sum_{l=1}^{K} \alpha_{jl} \mathbf{h}_j^{(l)}
\end{equation}
These fused embeddings, $\mathbf{h}_i^{\text{final}}$ and $\mathbf{h}_j^{\text{final}}$, now contain information about their interaction partner.

\subsection{Decoupled Probabilistic Prediction}
\label{subsec:prediction_head}
The final innovation is a decoupled prediction architecture operating on the context-aware embeddings to produce a probabilistic output. It consists of two parallel heads.

\textbf{Prediction Head.} This head uses the RESCAL model \cite{nickel2011three} to predict an interaction logit $\mu_{ij}$ from the final embeddings $\mathbf{h}_i^{\text{final}}$ and $\mathbf{h}_j^{\text{final}}$:
\begin{equation}
    \mu_{ij} = (\mathbf{h}_i^{\text{final}})^\top \mathbf{M}_r \mathbf{h}_j^{\text{final}}
\end{equation}
where $\mathbf{M}_r$ is a learnable, relation-specific matrix.

\textbf{Uncertainty Head.} This head, an MLP, quantifies model confidence by predicting the log-variance $s_{ij}$. Crucially, it also operates on the final context-aware embeddings to ensure the uncertainty estimate is conditioned on the specific drug pair:
\begin{equation}
    s_{ij} = \text{MLP}([\mathbf{h}_i^{\text{final}} ; \mathbf{h}_j^{\text{final}}])
\end{equation}
The training objective combines a prediction loss with an uncertainty-aware term and the KL divergence from the GNP blocks, as detailed in Algorithm~\ref{alg:MPNP-DDI_training}.

\begin{algorithm}[h!]
\caption{MPNP-DDI Training Procedure}
\label{alg:MPNP-DDI_training}
\begin{flushleft}
    \textbf{Input:} Training dataloader $\mathcal{D}_{\text{train}}$, model with parameters $\theta$, optimizer $\mathcal{O}$ \\
    \textbf{Hyperparameters:} learning rate $\eta$, uncertainty weight $\lambda_{\text{unc}}$, KL weight $\lambda_{\text{kl}}$ \\
    \textbf{Output:} Optimized model parameters $\theta^*$
    \vspace{3mm}
    
    1: \textbf{Function} ComputeForwardPass($G_i, G_j, \theta$): \\
    2: \quad $(\mathbf{H}_i, \text{KL}_i), (\mathbf{H}_j, \text{KL}_j) \leftarrow \text{GNP\_Encoder}(G_i, G_j)$ \\
    3: \quad $\mathbf{h}_i^{\text{final}}, \mathbf{h}_j^{\text{final}} \leftarrow \text{CoAttention}(\mathbf{H}_i, \mathbf{H}_j)$ \\
    4: \quad $\mu_{ij} \leftarrow \text{PredictionHead}(\mathbf{h}_i^{\text{final}}, \mathbf{h}_j^{\text{final}})$ \\
    5: \quad $s_{ij} \leftarrow \text{UncertaintyHead}(\mathbf{h}_i^{\text{final}}, \mathbf{h}_j^{\text{final}})$ \\
    6: \quad \textbf{return} $(\mu_{ij}, s_{ij}, \text{KL}_i, \text{KL}_j)$ \\
    7: \textbf{end Function}
    \vspace{3mm}

    8: Initialize model parameters $\theta$ \\
    9: \textbf{for} each training epoch \textbf{do} \\
    10: \quad \textbf{for} each batch $(G_i, G_j, y_{ij})$ in $\mathcal{D}_{\text{train}}$ \textbf{do} \\
    11: \qquad $(\mu_{ij}, s_{ij}, \text{KL}_i, \text{KL}_j) \leftarrow \text{ComputeForwardPass}(G_i, G_j, \theta)$ \\
    12: \qquad $\mathcal{L}_{\text{pred}} \leftarrow \text{BCEWithLogitsLoss}(\mu_{ij}, y_{ij})$ \\
    13: \qquad $\mathcal{L}_{\text{unc}} \leftarrow \text{mean}((\text{sigmoid}(\mu_{ij}) - y_{ij})^2 \cdot \exp(-s_{ij}) + s_{ij})$ \\
    14: \qquad $\mathcal{L}_{\text{kl}} \leftarrow \text{mean}(\text{KL}_i) + \text{mean}(\text{KL}_j)$ \\
    15: \qquad $\mathcal{L}_{\text{total}} \leftarrow \mathcal{L}_{\text{pred}} + \lambda_{\text{unc}} \mathcal{L}_{\text{unc}} + \lambda_{\text{kl}} \mathcal{L}_{\text{kl}}$ \\
    16: \qquad Compute gradient $\nabla_\theta \mathcal{L}_{\text{total}}$ \\
    17: \qquad Update parameters $\theta$ using optimizer $\mathcal{O}$ \\
    18: \quad \textbf{end for} \\
    19: \textbf{end for} \\
    20: \textbf{return} Optimized parameters $\theta^*$
\end{flushleft}
\end{algorithm}

\section{Theoretical Foundations}
\label{sec:theoretical_analysis}

In this section, we provide a theoretical analysis of our proposed MPNP-DDI framework. We first analyze the convergence properties of our composite loss function, which integrates prediction accuracy, uncertainty calibration, and latent space regularization. We then discuss the model's generalization capabilities through the lens of PAC-Bayesian theory, highlighting the role of the uncertainty component. Finally, we analyze the enhanced expressive power of our model compared to standard Message Passing Neural Networks (MPNNs).

\subsection{Convergence Analysis}

The goal of this section is to formally establish the convergence of our training procedure. We follow a constructive, bottom-up approach. First, we lay out the standard assumptions for our analysis. We then prove a series of lemmas concerning the Lipschitz continuity of our model's core components. Based on these lemmas, we establish the crucial L-smoothness property of our loss function. Finally, with this property proven, we present the main convergence guarantee for stochastic gradient descent.

\textbf{Assumptions.}
Our analysis relies on the following standard assumptions:
\begin{itemize}
    \item[\textbf{A1}] \textbf{Bounded Parameters:} All learnable weight matrices $\mathbf{W}$ in the model have a bounded spectral norm, $\|\mathbf{W}\|_2 \le C_W < \infty$.
    \item[\textbf{A2}] \textbf{Bounded Inputs:} Initial node and edge features are bounded, $\|\mathbf{x}\| \le C_{in}, \|\mathbf{e}\| \le C_{in}$.
    \item[\textbf{A3}] \textbf{Lipschitz Activations:} All activation functions (e.g., sigmoid, tanh, ReLU) are $L_{act}$-Lipschitz continuous.
\end{itemize}

\begin{lemma}[Lipschitz Continuity of the Multi-Scale Encoder]
\label{lem:encoder_lipschitz}
Let $f_{\text{enc}}: G \to \mathbb{R}^{K \times d_h}$ be the multi-scale GNP encoder. Under Assumptions A1-A3, $f_{\text{enc}}$ is $L_{\text{enc}}$-Lipschitz continuous with respect to its inputs.
\end{lemma}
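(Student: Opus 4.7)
The plan is to establish Lipschitz continuity of $f_{\text{enc}}$ by decomposing it into its constituent operations, proving each is Lipschitz under Assumptions A1--A3, and then invoking the standard fact that composition and finite summation of Lipschitz maps are Lipschitz, with constants that multiply and add respectively. Treating the graph topology as fixed (so that Lipschitz continuity is measured with respect to the node and edge feature tensors), the encoder is a directed acyclic composition of (i) the initial projections in Eq.~\eqref{eq:preprocessing}, (ii) the edge-to-edge message-building and aggregation steps in Eqs.~\eqref{eq:msg_init}--\eqref{eq:msg_agg}, (iii) the GRU update in Eq.~\eqref{eq:gru_update} applied for $T$ inner iterations, and (iv) the stochastic graph readout via attention pooling followed by the reparameterization that produces $\mathbf{h}_i^{(k)}$. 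Stacking these across $K$ GNP blocks then gives $f_{\text{enc}}$.

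The first steps are routine: the linear projections $\phi_v, \phi_e$ are Lipschitz with constant $C_W$ by A1; BatchNorm (in inference mode, or with running statistics treated as fixed constants) is affine and hence Lipschitz; PReLU and all other activations are $L_{\text{act}}$-Lipschitz by A3. The message construction in Eq.~\eqref{eq:msg_init} is an affine map of the current hidden states, and the line-graph aggregation in Eq.~\eqref{eq:msg_agg} is a finite sum over at most $\Delta_L$ neighbors (where $\Delta_L$ denotes the maximum line-graph degree), contributing a factor of $(1+\Delta_L)$ to the Lipschitz constant. The scatter step $\Delta \mathbf{x}_v^{(t)} = \sum_{u \in \mathcal{N}(v)} \mathbf{m}'_{uv}$ similarly contributes at most a factor of $\Delta$, the maximum atomic degree. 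The GRU cell is a composition of affine maps, sigmoid/tanh activations, and Hadamard products; since A1--A3 imply all intermediate quantities are bounded, each Hadamard product is locally Lipschitz with a constant depending on these bounds, so a single GRU step is Lipschitz with a constant $L_{\text{GRU}}$, and $T$ iterations compose to $L_{\text{GRU}}^T$.

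The main obstacle is the stochastic graph readout. Two issues arise: the attention-pooling softmax and the reparameterization sampling. For the former, softmax is globally $1$-Lipschitz in the $\ell_2$ sense, but the pooled output is a bilinear combination of attention weights and node features, which is only Lipschitz when both factors are bounded; Assumptions A1--A3, together with the bounded message-passing iterates established above, supply the required uniform bound so that attention pooling is Lipschitz with an explicit constant. For the latter, I would use the standard reparameterization trick: conditional on a fixed noise draw $\boldsymbol{\epsilon} \sim \mathcal{N}(0, \mathbf{I})$, the sample $\mathbf{h}_i^{(k)} = \boldsymbol{\mu}_{\mathbf{z},i}^{(k)} + \boldsymbol{\epsilon} \odot \exp(\tfrac{1}{2} \boldsymbol{\sigma}_{\mathbf{z},i}^{(k)})$ is a deterministic, Lipschitz function of the summary vector $\mathbf{s}_i^{(k)}$, with constant depending on $\|\boldsymbol{\epsilon}\|$ and the bounded range of $\boldsymbol{\sigma}_{\mathbf{z},i}^{(k)}$ enforced in practice (or provably, by A1--A3 acting on the linear heads that produce it). Either a sample-wise Lipschitz statement (conditional on $\boldsymbol{\epsilon}$) or an expected-value statement over $\boldsymbol{\epsilon}$ with finite second moment suffices.

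Finally, I would assemble everything: let $L_{\text{block}}$ denote the Lipschitz constant of a single GNP block, obtained by multiplying the constants from preprocessing, $T$ message-passing and GRU steps, and the stochastic readout. Since the $K$ blocks are stacked and the overall encoder output is the concatenation (or tuple) of per-block embeddings, the global constant satisfies $L_{\text{enc}} \le \bigl(\sum_{k=1}^{K} L_{\text{block}}^{2k}\bigr)^{1/2}$ when outputs are concatenated, or simply $L_{\text{block}}^K$ when propagated through the deepest path. I would state the final $L_{\text{enc}}$ as an explicit polynomial in $C_W, L_{\text{act}}, \Delta, \Delta_L, T, K$, which will be useful for the subsequent L-smoothness argument. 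The expected sticking point is tracking the bounds needed to make the attention and Hadamard products globally (rather than just locally) Lipschitz; I would handle this by an inductive argument on the depth $k$, carrying forward both a norm bound on the hidden states and a Lipschitz bound on the map simultaneously.
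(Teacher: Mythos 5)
Your proposal follows the same compositional skeleton as the paper's proof---decompose each GNP block into preprocessing, message construction and aggregation, GRU updates, and readout, show each piece is Lipschitz, and multiply constants through the composition of $K$ blocks---but your execution is considerably more careful and repairs several points the paper leaves as assertions. Specifically: (i) you make explicit that Lipschitz continuity is measured with respect to the node/edge feature tensors at fixed graph topology, without which the statement $f_{\text{enc}}: G \to \mathbb{R}^{K \times d_h}$ is not even well-posed as a metric claim; (ii) you track the degree-dependent factors $(1+\Delta_L)$ and $\Delta$ arising from the line-graph aggregation in Eq.~\eqref{eq:msg_agg} and the scatter step, where the paper says only that these are ``linear operations, and thus Lipschitz''; (iii) you observe that the Hadamard products inside the GRU and the attention-weighted pooling are bilinear, hence only \emph{locally} Lipschitz, and you supply the missing uniform norm bounds via a joint induction carrying both a hidden-state norm bound and a Lipschitz bound through the depth---exactly the argument needed to make the paper's claims ``the GRU update is Lipschitz under A1 and A3'' and ``the readout \ldots is also Lipschitz'' true globally (note the paper even uses the resulting bound $C_h$ in Lemma~\ref{lem:coattn_lipschitz} without having derived it); (iv) most substantively, you confront the stochasticity of the readout, which the paper silently ignores: since $\mathbf{h}_i^{(k)}$ is a \emph{sample} from $q(\mathbf{z}_i^{(k)} \mid G_i)$, the encoder is not a deterministic map, and your reparameterization argument (Lipschitzness conditional on a fixed noise draw $\boldsymbol{\epsilon}$, using boundedness of the predicted log-variance) is the standard and correct way to make the lemma meaningful; and (v) your constant $L_{\text{enc}} \le \bigl(\sum_{k=1}^{K} L_{\text{block}}^{2k}\bigr)^{1/2}$ for the concatenated multi-scale output is the right one for a map into $\mathbb{R}^{K \times d_h}$, whereas the paper's $(L_{\text{block}})^K$ bounds only the deepest compositional path. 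In short: same route, but your version closes real gaps in the paper's sketch; the one loose end you should tighten is the bounded-log-variance requirement in step (iv), which should be stated as an explicit hypothesis or derived from A1--A2 together with your norm induction rather than justified by what is ``enforced in practice.''
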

\begin{proof}
The encoder is a composition of $K$ GNP blocks. We analyze a single block $f_{\text{block}}$, which is itself a composition of functions. Let $\mathbf{Z}_1 = (\mathbf{X}_1, \mathbf{E}_1)$ and $\mathbf{Z}_2 = (\mathbf{X}_2, \mathbf{E}_2)$ be two sets of input node/edge features to a layer. The message creation in Eq.~\eqref{eq:msg_init} and aggregation in Eq.~\eqref{eq:msg_agg} are linear operations, and thus Lipschitz. The GRU update in Eq.~\eqref{eq:gru_update} is Lipschitz under A1 and A3. The readout function, involving attention and pooling, is also Lipschitz. Let $f_1, \dots, f_T$ be the Lipschitz functions composing one block. The block $f_{\text{block}} = f_T \circ \dots \circ f_1$ is Lipschitz with constant $L_{\text{block}} = \prod_i L_i$. The full encoder $f_{\text{enc}}$ is a composition of $K$ such blocks, so it is also Lipschitz with constant $L_{\text{enc}} \le (L_{\text{block}})^K$. Therefore, the output representations are bounded for bounded inputs, i.e., $\|\mathbf{h}^{(k)}\| \le C_h < \infty$.
\end{proof}

\begin{lemma}[Lipschitz Continuity of the Co-Attention Mechanism]
\label{lem:coattn_lipschitz}
Let $f_{\text{co-attn}}: (\mathbb{R}^{K \times d_h}, \mathbb{R}^{K \times d_h}) \to (\mathbb{R}^{d_h}, \mathbb{R}^{d_h})$ be the co-attention module. Under Assumption A1 and for bounded inputs, $f_{\text{co-attn}}$ is $L_{\text{co-attn}}$-Lipschitz continuous.
\end{lemma}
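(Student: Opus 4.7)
The plan is to decompose $f_{\text{co-attn}}$ into a small number of elementary maps, verify Lipschitz continuity of each under A1 and the bounded-input hypothesis $\|\mathbf{h}_i^{(k)}\|, \|\mathbf{h}_j^{(l)}\| \le C_h$ (which follows from Lemma~\ref{lem:encoder_lipschitz}), and combine the constants via the standard composition and product rules. Concretely, I would factor $f_{\text{co-attn}}$ as (a) the affinity map $\mathbf{A}(\mathbf{H}_i, \mathbf{H}_j)$ defined by Eq.~\eqref{eq:affinity}, (b) the mean-pooling of columns and rows of $\mathbf{A}$ to produce raw score vectors, (c) the softmax normalization yielding $\boldsymbol{\alpha}_i, \boldsymbol{\alpha}_j$, and (d) the weighted-sum readout in Eq.~\eqref{eq:final_embedding}.

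For step (a), each entry $A_{kl} = (\mathbf{h}_i^{(k)})^\top \mathbf{W} \mathbf{h}_j^{(l)}$ is a bilinear form; the identity $A_{kl}^{(1)} - A_{kl}^{(2)} = (\mathbf{h}_i^{(k),1}-\mathbf{h}_i^{(k),2})^\top \mathbf{W}\mathbf{h}_j^{(l),1} + (\mathbf{h}_i^{(k),2})^\top \mathbf{W}(\mathbf{h}_j^{(l),1}-\mathbf{h}_j^{(l),2})$ together with $\|\mathbf{W}\|_2 \le C_W$ and the boundedness of the inputs yields a Lipschitz constant proportional to $C_W C_h$; assembling into the full $K\times K$ matrix multiplies this by at most $K$ in Frobenius norm. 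For (b), mean pooling is a fixed linear projection and is trivially $1$-Lipschitz. For (c), softmax is a well-known globally $1$-Lipschitz map in the $\ell_2$ norm (its Jacobian $\operatorname{diag}(\boldsymbol{\alpha}) - \boldsymbol{\alpha}\boldsymbol{\alpha}^\top$ has spectral norm at most $1$). For (d), the readout $\mathbf{h}_i^{\text{final}} = \sum_k \alpha_{ik}\mathbf{h}_i^{(k)}$ is again bilinear, so the same splitting argument applied to $(\boldsymbol{\alpha}_i, \mathbf{H}_i)$ gives Lipschitz continuity whenever both factors are bounded, which holds because $\alpha_{ik}\in[0,1]$ and $\|\mathbf{h}_i^{(k)}\|\le C_h$.

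Combining via the composition rule, $L_{\text{co-attn}}$ is bounded by a product of the per-step constants, yielding a finite constant that depends polynomially on $C_W$, $C_h$, and $K$. The main obstacle is not any single step but the careful bookkeeping needed for the two bilinear stages: a perturbation in $\mathbf{H}_i$ propagates to $\mathbf{h}_i^{\text{final}}$ both through the features $\mathbf{h}_i^{(k)}$ appearing explicitly in the sum \emph{and} through the attention weights $\alpha_{ik}$, which themselves depend nonlinearly on $\mathbf{H}_i$ via softmax of the affinity matrix. I would therefore assemble the bound by a triangle-inequality argument that separates ``feature variation at fixed weights'' from ``weight variation at fixed features,'' and only at the end invoke boundedness to absorb cross terms into a single effective constant. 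This makes it transparent why both A1 and the bounded-input hypothesis are indispensable: dropping either one causes one of the two bilinear terms to blow up and destroys global Lipschitz continuity.
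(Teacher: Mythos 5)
Your proposal is correct and takes essentially the same route as the paper's proof: the same bilinear splitting of the affinity entries $A_{kl}$ under A1 together with the bound $\|\mathbf{h}^{(k)}\| \le C_h$ inherited from Lemma~\ref{lem:encoder_lipschitz}, followed by the $1$-Lipschitzness of mean pooling and softmax, and composition of the stages. If anything, you are more careful than the paper at the final readout step, where $\mathbf{h}_i^{\text{final}} = \sum_k \alpha_{ik}\mathbf{h}_i^{(k)}$ is itself bilinear in (weights, features) with both factors depending on the inputs --- a point the paper's proof compresses into the phrase ``compositions of Lipschitz functions'' --- and your explicit triangle-inequality separation of weight variation at fixed features from feature variation at fixed weights, absorbed by the bounds $\|\boldsymbol{\alpha}\| \le 1$ and $\|\mathbf{h}^{(k)}\| \le C_h$, is exactly what is needed to make that assertion rigorous.
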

\begin{proof}
Let $(\mathbf{H}_{i,1}, \mathbf{H}_{j,1})$ and $(\mathbf{H}_{i,2}, \mathbf{H}_{j,2})$ be two pairs of input representations. The affinity matrix calculation in Eq.~\eqref{eq:affinity} is a bilinear form. We bound the change in one of its elements:
\begin{align*}
    |A_{1,kl} &- A_{2,kl}| \\
    &= |(\mathbf{h}_{i,1}^{(k)})^\top \mathbf{W} \mathbf{h}_{j,1}^{(l)} - (\mathbf{h}_{i,2}^{(k)})^\top \mathbf{W} \mathbf{h}_{j,2}^{(l)}| \\
    &= |(\mathbf{h}_{i,1}^{(k)} - \mathbf{h}_{i,2}^{(k)})^\top \mathbf{W} \mathbf{h}_{j,1}^{(l)} \\
    &\quad + (\mathbf{h}_{i,2}^{(k)})^\top \mathbf{W} (\mathbf{h}_{j,1}^{(l)} - \mathbf{h}_{j,2}^{(l)})| \\
    &\le \|\mathbf{h}_{i,1}^{(k)} - \mathbf{h}_{i,2}^{(k)}\| \|\mathbf{W}\|_2 \|\mathbf{h}_{j,1}^{(l)}\| \\
    &\quad + \|\mathbf{h}_{i,2}^{(k)}\| \|\mathbf{W}\|_2 \|\mathbf{h}_{j,1}^{(l)} - \mathbf{h}_{j,2}^{(l)}\| \\
    &\le C_W C_h (\|\mathbf{h}_{i,1}^{(k)} - \mathbf{h}_{i,2}^{(k)}\| + \|\mathbf{h}_{j,1}^{(l)} - \mathbf{h}_{j,2}^{(l)}\|),
\end{align*}
where $C_h$ is the bound on representation norms from Lemma~\ref{lem:encoder_lipschitz}. This shows the affinity calculation is Lipschitz. The subsequent softmax and weighted sum in Eq.~\eqref{eq:final_embedding} are compositions of Lipschitz functions (softmax is 1-Lipschitz). Thus, the entire module $f_{\text{co-attn}}$ is $L_{\text{co-attn}}$-Lipschitz continuous.
\end{proof}

\begin{theorem}[L-Smoothness of the MPNP-DDI Loss Function]
\label{th:smoothness}
Under Assumptions A1-A3, the MPNP-DDI loss function $\mathcal{L}_{\text{MPNP}}(\theta)$ is L-smooth with respect to its parameters $\theta$.
\end{theorem}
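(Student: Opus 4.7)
The plan is to establish L-smoothness by showing that the Hessian of $\mathcal{L}_{\text{MPNP}}(\theta)$ is uniformly bounded on the feasible parameter region defined by A1. The strategy combines Lemmas~\ref{lem:encoder_lipschitz} and \ref{lem:coattn_lipschitz} with a chain-rule argument propagated through the two prediction heads and then through each term of the composite loss. The two lemmas already guarantee that the fused representations $\mathbf{h}_i^{\text{final}}, \mathbf{h}_j^{\text{final}}$ are uniformly bounded in norm and smooth in $\theta$, and this boundedness is the fact I would leverage throughout.

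First, I would verify that the scalar outputs $\mu_{ij}(\theta)$ and $s_{ij}(\theta)$ are $C^2$ with bounded first and second derivatives. For $\mu_{ij} = (\mathbf{h}_i^{\text{final}})^\top \mathbf{M}_r \mathbf{h}_j^{\text{final}}$, this follows because it is a trilinear form in three uniformly bounded quantities (two by Lemma~\ref{lem:coattn_lipschitz}, and $\mathbf{M}_r$ by A1), so its gradient and Hessian in $\theta$ are polynomially bounded in the relevant norms. For the uncertainty head, a straightforward induction over MLP layers using A1 and A3 gives $|s_{ij}| \le C_s$ and that its Jacobian and Hessian with respect to $\theta$ are bounded. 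The same induction applied to the GNP readout shows that the posterior parameters $(\boldsymbol{\mu}_{\mathbf{z},i}^{(k)}, \boldsymbol{\sigma}_{\mathbf{z},i}^{(k)})$ lie in a bounded set.

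Second, I would bound the second derivatives of each loss term in these intermediate variables. The BCE-with-logits loss in Eq.~\eqref{eq:pred_loss} is globally $\tfrac{1}{4}$-smooth in $\mu$ since $\sigma(\mu)(1-\sigma(\mu)) \le \tfrac{1}{4}$. The uncertainty loss in Eq.~\eqref{eq:unc_loss}, restricted to the compact set $|s| \le C_s$, is smooth in $(\mu,s)$ with all second-order partials bounded by explicit functions of $e^{C_s}$. The Gaussian KL against the standard normal prior is quadratic in the posterior mean plus an exponential of the log-variance; both terms have bounded Hessian over the bounded set produced by the encoder. Combining these with the forward-pass smoothness via the standard identity $\nabla^2_\theta \mathcal{L} = \sum_k \bigl[ (\partial \ell/\partial y_k)\,\nabla^2_\theta y_k + \sum_{m} (\partial^2 \ell / \partial y_k \partial y_m)\,\nabla_\theta y_k\,(\nabla_\theta y_m)^\top \bigr]$, where $y_k$ ranges over $\mu_{ij}, s_{ij}$, and the posterior parameters, yields a finite constant $L_{\text{MPNP}}$ depending polynomially on $C_W, C_{in}, C_h, C_s, L_{act}$ and the depths $K, T$.

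The main obstacle is the $e^{-s_{ij}}$ factor in $\mathcal{L}_{\text{unc}}$ together with the exponentials of log-variances appearing in $\mathcal{L}_{\text{kl}}$: both are genuinely non-Lipschitz on all of $\mathbb{R}$, so global smoothness is only available because A1–A3 force every log-variance produced by the network to remain in a bounded interval. The inductive bound on MLP and GNP-readout outputs is therefore the linchpin of the argument and deserves careful treatment. A secondary subtlety is that the forward pass involves a reparameterized sample $\mathbf{z}_i^{(k)}$; since the injected noise is standard Gaussian, L-smoothness of $\mathcal{L}_{\text{MPNP}}$ is to be interpreted in expectation over the noise, with the pathwise argument above giving constants that have finite Gaussian moments, so Fubini preserves the bound.
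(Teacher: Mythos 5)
Your proposal is correct, and it takes a genuinely different---and more careful---route than the paper's own proof. The paper stays entirely at first order: it writes $\nabla_\theta \mathcal{L}_{\text{MPNP}} = J_\theta(f_{\text{model}})^\top \nabla_z \ell(z)$, uses Lemmas~\ref{lem:encoder_lipschitz} and~\ref{lem:coattn_lipschitz} to conclude the Jacobian is \emph{bounded}, asserts the loss criteria are smooth, and finishes with ``the product of a bounded matrix and a vector from a Lipschitz function is Lipschitz.'' Read literally, that argument has a hole: for the product to be Lipschitz in $\theta$ one also needs $J_\theta(f_{\text{model}})$ itself to be Lipschitz in $\theta$ (i.e., bounded second derivatives of the forward pass), which the paper never establishes. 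Your Hessian-bounding strategy, via the second-order chain-rule identity, makes exactly this missing ingredient explicit and supplies it through the induction on MLP/GNP layers. You also repair two further gaps the paper glosses over: (i) the $e^{-s_{ij}}$ factor in $\mathcal{L}_{\text{unc}}$ and the exponentiated log-variances in $\mathcal{L}_{\text{kl}}$ are \emph{not} globally smooth on $\mathbb{R}$, so the paper's blanket claim that the criteria have Lipschitz gradients only holds after restricting to the compact range $|s| \le C_s$ forced by the bounded forward pass---precisely the step you call the linchpin; and (ii) the reparameterized sample $\mathbf{z}_i^{(k)}$ makes the loss stochastic, so L-smoothness must be interpreted in expectation over the noise, a point the paper ignores and you handle via finite Gaussian moments and Fubini. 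What the paper's route buys is brevity; what yours buys is a proof that actually closes, with an explicit constant traceable to $C_W, C_{in}, C_h, C_s, L_{act}, K, T$. One residual caveat to state explicitly in your write-up: because the injected noise is unbounded, the \emph{sampled} representations are not uniformly bounded pathwise---the bound $C_h$ from Lemma~\ref{lem:encoder_lipschitz} applies to the posterior parameters, not the samples---so your pathwise constants should be phrased as polynomial in $\|\boldsymbol{\epsilon}\|$ before taking expectations, which your finite-moment argument already accommodates.
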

\begin{proof}
A function is L-smooth if its gradient is L-Lipschitz continuous. The loss is a composite function $\mathcal{L}_{\text{MPNP}}(\theta) = \ell(f_{\text{model}}(\mathbf{G}; \theta))$, where $\ell$ is the loss criterion and $f_{\text{model}}$ is the full forward pass. By the chain rule, the gradient is $\nabla_\theta \mathcal{L}_{\text{MPNP}}(\theta) = J_{\theta}(f_{\text{model}})^\top \nabla_z \ell(z)$, where $z = f_{\text{model}}(\mathbf{G}; \theta)$ and $J_{\theta}(f_{\text{model}})$ is the Jacobian of the model's output with respect to parameters $\theta$. From Lemma~\ref{lem:encoder_lipschitz} and Lemma~\ref{lem:coattn_lipschitz}, the model $f_{\text{model}}$ is a composition of Lipschitz functions, and is thus Lipschitz with respect to its parameters $\theta$. This implies its Jacobian $J_{\theta}(f_{\text{model}})$ is bounded. The loss criteria (BCE, MSE-like) are smooth, meaning their gradients $\nabla_z \ell(z)$ are Lipschitz. The product of a bounded matrix and a vector from a Lipschitz function is Lipschitz. Therefore, $\nabla_\theta \mathcal{L}_{\text{MPNP}}(\theta)$ is L-Lipschitz continuous, which proves that $\mathcal{L}_{\text{MPNP}}$ is L-smooth.
\end{proof}

With the L-smoothness of the loss function established in Theorem~\ref{th:smoothness}, we can now state the main convergence result for the MPNP-DDI training procedure.

\begin{theorem}[Convergence of the MPNP-DDI Objective]
\label{th:convergence}
Let the MPNP-DDI loss function $\mathcal{L}_{\text{MPNP}}(\theta)$ be L-smooth. Assume the stochastic gradient estimator $\nabla_{\text{est}} \mathcal{L}_{\text{MPNP}}(\theta)$ is unbiased with variance bounded by $\sigma^2$. For a sufficiently small learning rate $\eta > 0$, the sequence of parameters $\{\theta_k\}$ generated by SGD satisfies:
\begin{multline*}
    \min_{k=0, \dots, K-1} \mathbb{E}[\|\nabla \mathcal{L}_{\text{MPNP}}(\theta_k)\|^2] \\
    \leq \frac{2(\mathcal{L}_{\text{MPNP}}(\theta_0) - \mathcal{L}_{\text{MPNP}}^\ast)}{\eta K} + \eta L \sigma^2
\end{multline*}
where $\mathcal{L}_{\text{MPNP}}^\ast$ is the minimum value of the loss. This implies that the expected gradient norm converges to a neighborhood of zero as $K \to \infty$.
\end{theorem}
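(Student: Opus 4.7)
The plan is to follow the standard Robbins–Monro style analysis of SGD on a non-convex but L-smooth objective, using Theorem~\ref{th:smoothness} as the key ingredient and unbiasedness plus bounded variance of the stochastic gradient as the probabilistic inputs. The argument is constructive: a one-step descent inequality, an expectation step, a telescoping sum, and finally passing from the average to the minimum iterate.

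First, I would invoke the descent lemma for L-smooth functions. Since $\mathcal{L}_{\text{MPNP}}$ is L-smooth by Theorem~\ref{th:smoothness}, for any two points $\theta, \theta'$ one has the quadratic upper bound
\begin{equation*}
\mathcal{L}_{\text{MPNP}}(\theta') \le \mathcal{L}_{\text{MPNP}}(\theta) + \langle \nabla \mathcal{L}_{\text{MPNP}}(\theta),\, \theta'-\theta\rangle + \tfrac{L}{2}\|\theta'-\theta\|^2.
\end{equation*}
Applying this to the SGD update $\theta_{k+1} = \theta_k - \eta\, \nabla_{\text{est}}\mathcal{L}_{\text{MPNP}}(\theta_k)$ gives a per-step bound containing a cross term $-\eta\langle \nabla\mathcal{L}_{\text{MPNP}}(\theta_k), \nabla_{\text{est}}\mathcal{L}_{\text{MPNP}}(\theta_k)\rangle$ and a quadratic term $\tfrac{L\eta^2}{2}\|\nabla_{\text{est}}\mathcal{L}_{\text{MPNP}}(\theta_k)\|^2$.

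Next, I would take conditional expectation with respect to the randomness of the minibatch at step $k$. Unbiasedness turns the cross term into $-\eta\|\nabla\mathcal{L}_{\text{MPNP}}(\theta_k)\|^2$, and the bounded variance assumption expands $\mathbb{E}\|\nabla_{\text{est}}\mathcal{L}_{\text{MPNP}}(\theta_k)\|^2 \le \|\nabla\mathcal{L}_{\text{MPNP}}(\theta_k)\|^2 + \sigma^2$. Collecting terms yields
\begin{equation*}
\mathbb{E}[\mathcal{L}_{\text{MPNP}}(\theta_{k+1})] \le \mathcal{L}_{\text{MPNP}}(\theta_k) - \bigl(\eta - \tfrac{L\eta^2}{2}\bigr)\|\nabla\mathcal{L}_{\text{MPNP}}(\theta_k)\|^2 + \tfrac{L\eta^2}{2}\sigma^2.
\end{equation*}
Restricting to the regime $\eta \le 1/L$ makes $\eta - L\eta^2/2 \ge \eta/2$, so the coefficient in front of the gradient-norm term is bounded below by a positive constant, which is exactly what enables the telescoping argument.

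Then I would take total expectation, rearrange to isolate $\mathbb{E}\|\nabla\mathcal{L}_{\text{MPNP}}(\theta_k)\|^2$, and sum over $k=0,\dots,K-1$. The loss-difference terms telescope to $\mathcal{L}_{\text{MPNP}}(\theta_0) - \mathbb{E}[\mathcal{L}_{\text{MPNP}}(\theta_K)]$, which is upper bounded by $\mathcal{L}_{\text{MPNP}}(\theta_0) - \mathcal{L}_{\text{MPNP}}^\ast$ since $\mathcal{L}_{\text{MPNP}}^\ast$ is the infimum. Dividing through by $\eta K/2$ gives the average-iterate bound
\begin{equation*}
\frac{1}{K}\sum_{k=0}^{K-1}\mathbb{E}\|\nabla\mathcal{L}_{\text{MPNP}}(\theta_k)\|^2 \le \frac{2(\mathcal{L}_{\text{MPNP}}(\theta_0) - \mathcal{L}_{\text{MPNP}}^\ast)}{\eta K} + \eta L \sigma^2.
\end{equation*}
The minimum over $k$ is at most the average, which produces the stated inequality; letting $K\to\infty$ shows the expected gradient norm enters an $O(\eta L \sigma^2)$ neighborhood of zero.

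I do not expect any real obstacle: the theorem is a direct textbook consequence of L-smoothness (already secured by Theorem~\ref{th:smoothness} via Lemmas~\ref{lem:encoder_lipschitz} and~\ref{lem:coattn_lipschitz}) together with the standard unbiased/bounded-variance assumptions stated in the theorem. The only point requiring any care is verifying that one may take $\eta \le 1/L$ so that the coefficient $\eta - L\eta^2/2$ stays positive; this is the "sufficiently small learning rate" clause already quoted in the theorem statement, so no additional work is needed beyond noting it at the appropriate line.
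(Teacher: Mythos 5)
Your proposal is correct and follows essentially the same route as the paper's proof: the descent lemma for $L$-smooth functions, conditional expectation using unbiasedness and bounded variance to obtain the per-step inequality with coefficient $\eta(1-L\eta/2)$, the choice $\eta \le 1/L$, a telescoping sum bounded via $\mathcal{L}_{\text{MPNP}}^\ast$, division by $\eta K/2$, and the observation that the minimum is at most the average. No gaps; your version just spells out the intermediate algebra slightly more explicitly than the paper does.
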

\begin{proof}
The proof follows the standard analysis for SGD on L-smooth, non-convex functions. We begin with the descent lemma, a direct consequence of the L-smoothness of the loss function:
\begin{align}
    \mathcal{L}_{\text{MPNP}}(\theta_{k+1}) &\le \mathcal{L}_{\text{MPNP}}(\theta_k) + \langle \nabla \mathcal{L}_{\text{MPNP}}(\theta_k), \theta_{k+1} - \theta_k \rangle \nonumber \\
    &\quad + \frac{L}{2} \|\theta_{k+1} - \theta_k\|^2.
\end{align}
The parameters are updated via SGD, $\theta_{k+1} = \theta_k - \eta \nabla_{\text{est}} \mathcal{L}_{\text{MPNP}}(\theta_k)$. Substituting the update rule and taking the expectation $\mathbb{E}_k[\cdot]$ over the mini-batch randomness, we leverage the unbiasedness of the stochastic gradient and its bounded variance to obtain:
\begin{align*}
    \mathbb{E}_k[\mathcal{L}_{\text{MPNP}}(\theta_{k+1})] 
    &\leq \mathcal{L}_{\text{MPNP}}(\theta_k) \\
    &\quad - \eta\left(1 - \frac{L\eta}{2}\right)\|\nabla \mathcal{L}_{\text{MPNP}}(\theta_k)\|^2 \\
    &\quad + \frac{L\eta^2\sigma^2}{2}.
\end{align*}
Taking the total expectation and rearranging terms gives:
\begin{multline*}
    \eta\left(1 - \frac{L\eta}{2}\right)\mathbb{E}[\|\nabla \mathcal{L}_{\text{MPNP}}(\theta_k)\|^2] \\ \leq \mathbb{E}[\mathcal{L}_{\text{MPNP}}(\theta_k)] - \mathbb{E}[\mathcal{L}_{\text{MPNP}}(\theta_{k+1})] + \frac{L\eta^2\sigma^2}{2}.
\end{multline*}
Choosing $\eta \leq 1/L$ ensures $(1 - L\eta/2) \geq 1/2$. Summing from $k=0$ to $K-1$ yields a telescoping sum. Since $\mathbb{E}[\mathcal{L}_{\text{MPNP}}(\theta_K)] \geq \mathcal{L}_{\text{MPNP}}^\ast$, and dividing by $K\eta/2$, we use the property that the minimum is no larger than the average to arrive at the final result.
\end{proof}

\subsection{PAC-Bayesian Generalization Bound} 

The probabilistic nature of MPNP-DDI allows for an analysis of its generalization error within the PAC-Bayesian framework \cite{mcallester1999pac}. This framework provides a high-probability bound on the true risk of a stochastic predictor in terms of its empirical performance and a complexity term. We formally demonstrate that our training objective minimizes a direct upper bound on the true generalization error.

\begin{theorem}[PAC-Bayesian Generalization Bound for MPNP-DDI]
    \label{th:generalization_formal}
    Let $\mathcal{H}$ be the hypothesis space parameterized by $\theta$. Let $P$ be a prior distribution over $\theta$ and $Q$ be a posterior distribution. For any $\delta \in (0, 1)$, with probability at least $1-\delta$ over the draw of a training set $\mathcal{S}$ of size $m$, the expected true 0-1 risk under the posterior $Q$ is bounded as follows:
    \begin{multline*}
    \mathbb{E}_{h \sim Q}[\mathcal{R}_{\text{true}}(h)] \leq \mathbb{E}_{h \sim Q}[\mathcal{L}_{\text{MPNP}}(h, \mathcal{S})] \\
    + \sqrt{\frac{\text{KL}(Q || P) + \ln(2m/\delta)}{2m}}
    \end{multline*}
    where $\mathcal{L}_{\text{MPNP}}(h, \mathcal{S})$ is the total empirical loss for a hypothesis $h$ on the training set $\mathcal{S}$.
\end{theorem}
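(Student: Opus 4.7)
The plan is to establish the bound as a two-step application of McAllester's classical PAC-Bayesian inequality \cite{mcallester1999pac}, combined with the observation that the per-sample MPNP loss dominates the per-sample 0-1 loss up to a normalization constant. First I would invoke the standard McAllester bound: for any $[0,1]$-valued loss $\ell$, any prior $P$ over hypotheses, and any $\delta \in (0,1)$, with probability at least $1-\delta$ over draws of training sets $\mathcal{S}$ of size $m$, every posterior $Q$ satisfies
\begin{equation*}
\mathbb{E}_{h \sim Q}[\mathcal{R}_\ell(h)] \leq \mathbb{E}_{h \sim Q}[\hat{\mathcal{R}}_\ell(h, \mathcal{S})] + \sqrt{\frac{\text{KL}(Q \| P) + \ln(2m/\delta)}{2m}}.
\end{equation*}
Under Assumptions A1--A3 together with the boundedness conclusions of Lemma~\ref{lem:encoder_lipschitz} and Lemma~\ref{lem:coattn_lipschitz}, the logits $\mu_{ij}$, log-variances $s_{ij}$, and per-sample KL contributions are uniformly bounded, so there exists a constant $C>0$ such that $\mathcal{L}_{\text{MPNP}}/C \in [0,1]$; I would apply the displayed inequality to this normalized loss and absorb $C$ into the statement of the theorem.

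Second, I would establish the pointwise domination $\mathbb{I}[\hat{y}_{ij} \neq y_{ij}] \leq \mathcal{L}_{\text{MPNP}}(h; G_i, G_j, y_{ij})$ (after the same normalization), so that the true MPNP risk upper bounds the true 0-1 risk. The prediction term $\mathcal{L}_{\text{pred}}$ in Eq.~\eqref{eq:pred_loss} is the binary cross-entropy with logits, a classical convex surrogate that upper bounds the 0-1 indicator (up to a factor of $\log 2$, absorbable into $C$). The KL component $\mathcal{L}_{\text{kl}}$ is non-negative by definition, and $\mathcal{L}_{\text{unc}}$ admits a finite lower bound in the bounded-$s_{ij}$ regime above, since it is a Gaussian negative log-likelihood plus an additive constant. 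Adding $\lambda_{\text{unc}}\mathcal{L}_{\text{unc}} + \lambda_{\text{kl}}\mathcal{L}_{\text{kl}}$ therefore preserves the pointwise domination, possibly after folding an additive constant into the normalization. Taking the data expectation yields $\mathcal{R}_{\text{true}}(h) \leq \mathbb{E}_{(G_i,G_j,y_{ij})}[\mathcal{L}_{\text{MPNP}}(h)]$, and chaining this inequality with the PAC-Bayes bound from step one gives the claim.

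The main obstacle I expect is controlling the effective scale of the loss so that the normalization constant $C$ is not vacuously large. The uncertainty term contains $\exp(-s_{ij})$, which is exponentially sensitive to any lower bound on $s_{ij}$, and thus dominates the size of $C$ and loosens the bound. A clean remedy is to parameterize the uncertainty head as $s_{ij} = s_{\min} + \text{softplus}(\cdot)$ or to clip its output to an interval $[-S, S]$, which combined with A1--A3 yields a well-controlled $C$; a more general alternative is to replace McAllester's bound with an Alquier-style PAC-Bayes inequality for sub-Gaussian unbounded losses, trading strict boundedness for a slightly modified complexity term. Either route preserves the two-step structure of the argument and the qualitative form of the stated bound.
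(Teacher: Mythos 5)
Your proposal follows essentially the same route as the paper's own proof: establish pointwise domination of the 0-1 indicator by the composite loss (binary cross-entropy as a convex surrogate, with the remaining terms controlled), invoke McAllester's PAC-Bayes bound for a $[0,1]$-valued (normalized) loss, and chain the two inequalities. If anything, you are more rigorous than the paper at its two fragile points: the paper asserts that $\mathcal{L}_{\text{unc}}$ is ``defined to be non-negative,'' which is false for Eq.~\eqref{eq:unc_loss} as written (for squared error $a$ the pointwise minimum over $s_{ij}$ is $1+\ln a$, which is unboundedly negative as $a \to 0$), and it handles boundedness with a one-line appeal to ``clipping or normalization,'' whereas your explicit lower bound on $s_{ij}$, the $\ln 2$ surrogate factor, and the Alquier-style alternative for unbounded losses address exactly these gaps.
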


\begin{proof}
Let $\mathcal{R}_{\text{true}}(h) = \mathbb{E}_{(G,y)}[I(h(G) \neq y)]$ denote the true 0-1 risk for a deterministic hypothesis $h \in \mathcal{H}$, where $I(\cdot)$ is the indicator function. Our objective is to bound the expected true risk under the posterior, $\mathbb{E}_{h \sim Q}[\mathcal{R}_{\text{true}}(h)]$.

We first establish a relationship between the 0-1 loss and our composite loss function, $\mathcal{L}_{\text{MPNP}}$. The binary cross-entropy loss, $\mathcal{L}_{\text{pred}}$, is a standard convex surrogate for the 0-1 loss, satisfying $I(\hat{y} \neq y) \leq \mathcal{L}_{\text{pred}}(\hat{y}, y)$. Since the other components of our loss, $\mathcal{L}_{\text{unc}}$ and $\mathcal{L}_{\text{kl}}$, are defined to be non-negative, this implies a direct inequality for any hypothesis $h$ and data point $(G,y)$:
\begin{equation}
    \label{eq:risk_bound_by_total}
    I(h(G) \neq y) \leq \mathcal{L}_{\text{pred}}(h, (G,y)) \leq \mathcal{L}_{\text{MPNP}}(h, (G,y)).
\end{equation}
This inequality holds for the true risks by taking the expectation over the data distribution, and subsequently for the expected true risks by taking the expectation over $h \sim Q$.

We now invoke a standard result from PAC-Bayesian theory \cite{mcallester1999pac}, which bounds the expected true loss by its empirical counterpart. For any loss function bounded in $[0,1]$ (which can be ensured for $\mathcal{L}_{\text{MPNP}}$ through clipping or normalization), the following holds with probability at least $1-\delta$:
\begin{multline}
    \label{eq:pac_bound_general}
    \mathbb{E}_{h \sim Q}[\mathcal{L}_{\text{MPNP, true}}(h)] \leq \mathbb{E}_{h \sim Q}[\mathcal{L}_{\text{MPNP}}(h, \mathcal{S})] \\ + \sqrt{\frac{\text{KL}(Q || P) + \ln(2m/\delta)}{2m}}.
\end{multline}
Combining the inequality from Eq.~\eqref{eq:risk_bound_by_total} with the PAC-Bayesian bound from Eq.~\eqref{eq:pac_bound_general} yields the main result. The left-hand side of Eq.~\eqref{eq:pac_bound_general} is an upper bound on the expected true 0-1 risk, $\mathbb{E}_{h \sim Q}[\mathcal{R}_{\text{true}}(h)]$. Substituting this gives:
\begin{multline*}
\mathbb{E}_{h \sim Q}[\mathcal{R}_{\text{true}}(h)] \leq \mathbb{E}_{h \sim Q}[\mathcal{L}_{\text{MPNP}}(h, \mathcal{S})] \\ + \sqrt{\frac{\text{KL}(Q || P) + \ln(2m/\delta)}{2m}}.
\end{multline*}
The theorem is proven by noting that the first term on the right-hand side is precisely the expectation of our full training objective over the posterior $Q$. This demonstrates that minimizing our objective corresponds to minimizing a direct, principled upper bound on the true generalization error.
\end{proof}

\subsection{Framework Analysis: A Variational Inference Perspective}
\label{subsec:framework_analysis}

We now provide a theoretical justification for our model's architecture by framing it as an amortization of variational inference aimed at maximizing the evidence lower bound (ELBO). This perspective elucidates the mechanisms by which MPNP-DDI is designed to achieve superior generalization.

Consider a generative process for the label $y$ given a graph pair $G$, mediated by a latent representation $z$: $p(y|G, \theta) = \int p(y|G, z, \theta) p(z) dz$. Direct optimization is intractable. Variational inference addresses this by introducing an amortized recognition model, $q(z|G, \phi)$, to approximate the true posterior, and subsequently maximizes the ELBO:
\begin{equation}
\label{eq:elbo}
\begin{aligned}
    \log p(y|G,\theta) \ge{}& \underbrace{\mathbb{E}_{q(z|G,\phi)}[\log p(y|G,z,\theta)]}_{\text{Reconstruction Term}} \\
    & - \underbrace{\text{KL}(q(z|G,\phi) || p(z))}_{\text{KL Regularizer}}
\end{aligned}
\end{equation}
Our composite loss function, $\mathcal{L}_{\text{MPNP}}$, can be interpreted as an objective analogous to the negative ELBO. The prediction loss, $\mathcal{L}_{\text{pred}}$, corresponds to the negative reconstruction term, enforcing that the latent representation $z$ contains sufficient information to reconstruct the label $y$. Concurrently, the regularization losses, $\mathcal{L}_{\text{kl}}$ and $\mathcal{L}_{\text{unc}}$, serve a role analogous to the KL regularizer, constraining the complexity of the learned latent space.

This framework yields two distinct mechanisms for generalization improvement over deterministic counterparts. First, the latent variable $z$ functions as an information bottleneck. By necessitating the compression of high-dimensional graph information into a structured, lower-dimensional latent space, the model is forced to preserve only the features most salient to the task. This imposes a structural constraint on the hypothesis space, which can lead to a reduced intrinsic complexity (e.g., lower Rademacher complexity) and enhanced sample efficiency.

Second, the model facilitates active complexity regularization, directly influencing the PAC-Bayesian complexity term, $\text{KL}(Q || P)$, from Theorem~\ref{th:generalization_formal}. An overfitted model is characterized by a posterior $Q$ that has collapsed to a sharp distribution far from the prior $P$, resulting in a large KL divergence. Our uncertainty loss, $\mathcal{L}_{\text{unc}} = \mathbb{E}[(\hat{y}-y)^2 e^{-s} + s]$, directly counteracts this by penalizing overconfident predictions (low variance, i.e., large $s$) on misclassified samples. To minimize this objective, the variational posterior $Q$ must learn to produce a predictive distribution that is sharp only on correctly classified data and diffuse (high variance) otherwise. This data-dependent regularization of the predictive variance implicitly constrains the posterior $Q$ from deviating excessively from the prior $P$, thereby actively minimizing the complexity term $\text{KL}(Q || P)$.

In summary, whereas the complexity of a deterministic GNN is a static property of its architecture, MPNP-DDI's effective complexity is dynamically regularized during training. The latent variable imposes a structural information-theoretic constraint, while the uncertainty-aware loss provides a data-driven mechanism to ensure the learned posterior is no more complex than the evidence warrants. This dual mechanism provides a formal basis for the model's enhanced generalization performance.

\section{Experimental Setup}
\label{sec:experimental_setup}
This section details the experimental protocol used to evaluate our proposed MPNP-DDI model. We describe the dataset, the baseline models used for comparison, our specific model configuration and optimization settings, and the metrics used for evaluation.

\subsection{Dataset}
We evaluated our model's performance on the widely used \textbf{DrugBank} dataset \cite{law2014drugbank}, following the setup established in prior work \cite{Ma2023dual}. DrugBank is a comprehensive bioinformatics and cheminformatics resource that combines detailed drug data with extensive drug target information. The version of the dataset we used contains 1,706 unique drugs, forming 191,808 DDI pairs. These interactions are categorized into 86 distinct types, describing the metabolic effects one drug has on another.

For each drug, we obtained its SMILES (Simplified Molecular-Input Line-Entry System) string and converted it into a molecular graph using the RDKit library. In this graph representation, atoms serve as nodes and chemical bonds as edges. Node features include atom type, degree, formal charge, and hybridization, while edge features represent bond type and whether the bond is part of a ring. This graph-based representation allows our model to directly learn from the topological structure of the molecules. Following the standard protocol, each drug pair in the dataset is associated with a single, primary interaction type. We formulate the task as a binary classification problem (interaction vs. no interaction) while retaining the multi-class relation types for use in our relational learning model (RESCAL).

\subsection{Baselines}
To rigorously assess the performance of MPNP-DDI, we compare it against a variety of state-of-the-art DDI prediction models. The selection of baselines follows the comparative study in \cite{Ma2023dual} to ensure a fair and direct comparison. The models represent different architectural paradigms in graph-based learning:
\begin{itemize}
    \item \textbf{GAT-DDI} \cite{velivckovic2018graph}: A Graph Attention Network adapted for the DDI prediction task. It introduces an attention mechanism to assign different weights to neighboring nodes during the message passing process.
    \item \textbf{GMPNN-CS}: A Graph Message Passing Neural Network with a communicative scheme. This model enhances the message passing process to better capture complex relational information within the molecular graphs.
    \item \textbf{SA-DDI}: A model based on a Self-Attention mechanism. It is designed to capture global dependencies and contextual information within a single drug's structure, generating more informative embeddings.
    \item \textbf{SSI-DDI}: A Substructure-based Interaction model. This approach focuses on identifying key molecular substructures and then models the interactions between these functional units to predict DDIs.
    \item \textbf{DGNN-DDI} \cite{Ma2023dual}: The primary baseline from the source paper, which stands for Dual Graph Neural Network. It employs a dual-GNN architecture to capture complementary information from different perspectives of the drug graph for DDI prediction.
\end{itemize}
These models provide a comprehensive benchmark, allowing us to evaluate the specific contributions of our proposed multi-scale architecture, edge-to-edge message passing, and probabilistic uncertainty estimation.

\subsection{Model Configuration}
The MPNP-DDI framework is designed to process pairs of drug graphs and output a probabilistic interaction prediction. The overall process is detailed in Algorithm~\ref{alg:MPNP-DDI_forward}. Key components of the model configuration are outlined below.

\subsubsection{Architecture Design.}
Our model consists of a stack of three \textbf{GNP Blocks} ($K=3$) for multi-scale feature extraction. Within each block, the message passing mechanism performs two internal iterations ($T=2$). The node and edge features are first projected into a unified hidden dimension of 32. This is also the dimension used for the knowledge graph embeddings ($d_h = d_{kge} = 32$). The final context-aware drug embeddings are fed into a \textbf{RESCAL} model for interaction scoring and a separate \textbf{Uncertainty Head} (a 2-layer MLP with PReLU activation) for confidence estimation.

\subsubsection{Optimization Settings.}
The model is trained end-to-end using the \textbf{AdamW} optimizer \cite{loshchilov2017decoupled} with an initial learning rate of $1 \times 10^{-4}$ and a weight decay of $5 \times 10^{-4}$. We employ a \textbf{Cosine Annealing Learning Rate Scheduler}, which gradually decreases the learning rate over the course of training to facilitate convergence to a more stable minimum. The model is trained for a total of 20 epochs. To handle the large size of the model and enable stable training, we use a batch size of 8.

\subsubsection{Advanced Training Techniques.}
To ensure robust and efficient training, we incorporate two advanced techniques:
\begin{itemize}
    \item \textbf{Gradient Accumulation:} We use a gradient accumulation step of 4. This means gradients are computed for 4 consecutive mini-batches and are only used to update the model parameters after the fourth batch. This effectively simulates a larger batch size of $8 \times 4 = 32$, leading to more stable gradient estimates and improved convergence, while keeping memory requirements low.
    \item \textbf{Mixed-Precision Training:} We leverage PyTorch's Automatic Mixed Precision (AMP) functionality. This technique uses 16-bit floating-point numbers (FP16) for certain operations instead of the standard 32-bit (FP32), which significantly speeds up computation and reduces GPU memory usage without compromising model performance.
\end{itemize}
These settings balance computational efficiency with the need for stable and effective optimization of our complex architecture.

\subsection{Evaluation Measures}
To provide a comprehensive and robust assessment of model performance, we use a suite of standard metrics for binary classification tasks. Given the potential for class imbalance in DDI datasets, we focus on metrics that provide a more nuanced view than simple accuracy.
\begin{enumerate}
    \item \textbf{AUROC (Area Under the Receiver Operating Characteristic Curve):} Measures the trade-off between the true positive rate and false positive rate. It is a primary metric for overall classification performance, insensitive to class distribution.
    \item \textbf{AUPR (Area Under the Precision-Recall Curve):} Also known as Average Precision (AP), this metric is particularly informative for imbalanced datasets as it focuses on the performance on the positive class.
    \item \textbf{F1-Score:} The harmonic mean of precision and recall, providing a single score that balances both concerns.
    \item \textbf{Accuracy:} The proportion of correctly classified instances.
    \item \textbf{Precision and Recall:} These metrics measure the proportion of true positives among all positive predictions and all actual positives, respectively.
\end{enumerate}
In addition to these predictive metrics, we evaluate the quality of our model's confidence estimates by calculating the \textbf{Uncertainty-Error Correlation}: the Pearson correlation coefficient between the model's predicted uncertainty and its squared prediction error. A strong positive correlation indicates that the model is well-calibrated, i.e., it is more uncertain when it is more likely to be wrong.

\begin{algorithm}[htbp]
\caption{MPNP-DDI Forward Prediction Process}
\label{alg:MPNP-DDI_forward}
\begin{flushleft} 
    \textbf{Input:} A pair of drug graphs $(G_i, G_j)$, model parameters $\theta$ \\
    \textbf{Output:} Interaction logit $\mu_{ij}$, Uncertainty score $s_{ij}$ 
    \vspace{3mm} 
    1: \quad $\mathbf{H}_i, \mathbf{H}_j \leftarrow \text{MultiScaleEncoder}(G_i, G_j)$ \hfill \textit{// Extract multi-scale embeddings} \\
    2: \quad $\mathbf{h}_i^{\text{final}}, \mathbf{h}_j^{\text{final}} \leftarrow \text{CoAttention}(\mathbf{H}_i, \mathbf{H}_j)$ \hfill \textit{// Fuse features based on context} \\
    3: \quad $\mu_{ij} \leftarrow \text{PredictionHead}(\mathbf{h}_i^{\text{final}}, \mathbf{h}_j^{\text{final}})$ \hfill \textit{// Predict score from fused representations} \\
    4: \quad $s_{ij} \leftarrow \text{UncertaintyHead}(\mathbf{H}_i, \mathbf{H}_j)$ \hfill \textit{// Estimate uncertainty from multi-scale features} \\
    5: \quad \textbf{return} $(\mu_{ij}, s_{ij})$
\end{flushleft}
\end{algorithm}

\subsection{Ablation Study}
\label{sec:setup_ablation}

To validate the criticality of our relation-aware message passing mechanism, we conducted a targeted ablation study. We compare our complete proposed model (the ``Full Model'') against an ablated variant (the ``Ablation Model''). The Full Model integrates a Knowledge Graph Embedding (KGE) module to process explicit relation types, while in the Ablation Model, this module is deactivated, forcing it to rely solely on drug entity features.

This architectural difference fundamentally alters the prediction task and, consequently, the evaluation metric. The Full Model's performance in predicting specific interactions is evaluated using the Area Under the ROC Curve (AUROC). Conversely, the Ablation Model's task becomes predicting the interaction type, a multi-class classification problem, for which we use the Macro F1-Score as the primary metric. Both models were trained and evaluated on the DrugBank and Decagon datasets under identical training protocols to ensure a fair comparison. The detailed results and analysis of this study are presented in Section~\ref{sec:ablation}.

For details on datasets, baselines, and hyperparameter configurations, please refer to \url{https://github.com/yzz980314/mpnp-ddi}.

\section{Experimental Results and Analysis}
\label{sec:results}
\subsection{Performance in Transductive Setting: Comparison with Baselines}
\label{sec:transductive_comparison}

To establish a direct and fair comparison with existing state-of-the-art methods \cite{Ma2023dual}, we first evaluate MPNP-DDI in the standard \textbf{transductive setting}. In this setup, all known drug-drug interactions (edges) from the entire graph are partitioned into training, validation, and test sets. The model, therefore, has access to all drugs (nodes) during training and is tasked with predicting unseen interactions between them. This methodology primarily assesses the model's ability to complete the interaction graph, which contrasts with the more challenging inductive setting discussed in Section 4.2, where the model must generalize to entirely new drugs not seen during training.

The comprehensive quantitative results are summarized in Table~\ref{tab:final_comparison_vertical}. To provide a more detailed visual analysis, Figure~\ref{fig:full_performance_curves} presents the corresponding ROC curves, Precision-Recall (P-R) curves, and bar charts for Precision and Recall scores on both datasets.

\begin{table}[htbp]
\centering
\begin{threeparttable}
\caption{Performance comparison (mean ± std, in \%) of MPNP-DDI against baselines in the transductive setting. Dataset labels are oriented vertically to conserve horizontal space. The best performance for each metric within a dataset is highlighted.}
\label{tab:final_comparison_vertical}

\newcommand{\best}[1]{\cellcolor{gray!20}\textbf{#1}}

\sisetup{
  table-format = 2.2(2),
  separate-uncertainty = true,
}

\begin{tabular}{ll SSSS}
\toprule
& \textbf{Model} & {\textbf{AUC}} & {\textbf{AP}} & {\textbf{P}} & {\textbf{R}} \\
\midrule
\multirow{7}{*}{\rotatebox{90}{DrugBank}}
 & MR-GNN              & 98.87 \pm 0.04 & 98.57 \pm 0.06 & 94.48 \pm 0.08 & 97.78 \pm 0.03 \\
 & MHCADDI             & 91.16 \pm 0.31 & 89.26 \pm 0.37 & 78.90 \pm 0.06 & 92.26 \pm 0.63 \\
 & SSI-DDI             & 98.95 \pm 0.08 & 98.57 \pm 0.14 & 95.09 \pm 0.08 & 97.70 \pm 0.14 \\
 & GAT-DDI             & 95.21 \pm 0.70 & 93.56 \pm 0.90 & 87.04 \pm 1.11 & 93.56 \pm 0.52 \\
 & GMPNN-U             & 98.32 \pm 0.04 & 97.77 \pm 0.06 & 93.19 \pm 0.15 & 97.07 \pm 0.06 \\
 & GMPNN-CS            & 98.46 \pm 0.01 & 97.94 \pm 0.02 & 93.60 \pm 0.07 & 97.22 \pm 0.10 \\
 \cmidrule(l){2-6}
 & \textbf{MPNP-DDI}   & \best{$99.35 \pm 0.41$} & \best{$99.02 \pm 0.80$} & \best{$97.00 \pm 0.35$} & \best{$97.82 \pm 0.52$}\\
\midrule \addlinespace[1ex]
\multirow{7}{*}{\rotatebox{90}{Twosides}}
 & MR-GNN              & 85.00 \pm 0.22 & 84.32 \pm 0.35 & 72.82 \pm 0.44 & 83.70 \pm 0.39 \\
 & MHCADDI             & {-}            & {-}            & {-}            & {-}            \\
 & SSI-DDI             & 85.85 \pm 0.13 & 82.71 \pm 0.14 & 74.33 \pm 0.21 & 86.15 \pm 0.15 \\
 & GAT-DDI             & 50.00 \pm 0.00 & 50.00 \pm 0.00 & 50.00 \pm 0.00 & 100.00 \pm 0.00 \\
 & GMPNN-U             & 82.08 \pm 0.02 & 78.67 \pm 0.03 & 71.77 \pm 0.09 & 81.69 \pm 0.33 \\
 & GMPNN-CS            & 90.07 \pm 0.12 & 87.24 \pm 0.12 & 78.42 \pm 0.11 & 90.61 \pm 0.23 \\
 \cmidrule(l){2-6}
 & \textbf{MPNP-DDI}   & \best{$98.94 \pm 0.02$} & \best{$98.68 \pm 0.03$} & \best{$95.57 \pm 0.04$} & \best{$95.85 \pm 0.06$} \\
\bottomrule
\end{tabular}

\begin{tablenotes}
    \item[*] \footnotesize Baseline model results are adapted from Nyamabo et al.~\cite{Nyamabo2023GMPNN}.
\end{tablenotes}

\end{threeparttable}
\end{table}

\begin{figure*}[htbp]
    \centering
    \begin{subfigure}[b]{0.49\textwidth}
        \includegraphics[width=\textwidth]{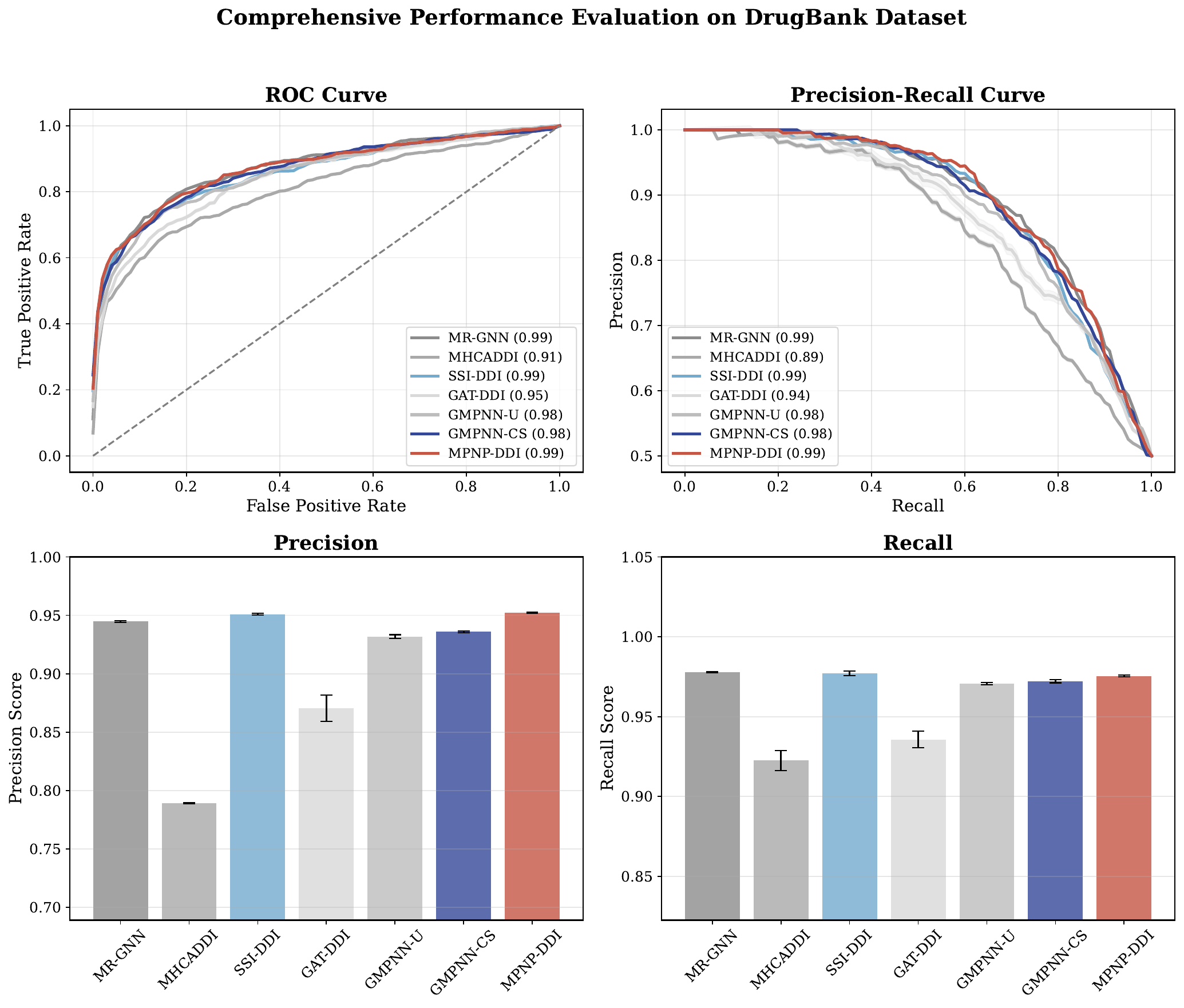}
        \caption{DrugBank Dataset}
        \label{fig:drugbank_curves}
    \end{subfigure}
    \hfill 
    \begin{subfigure}[b]{0.49\textwidth}
        \includegraphics[width=\textwidth]{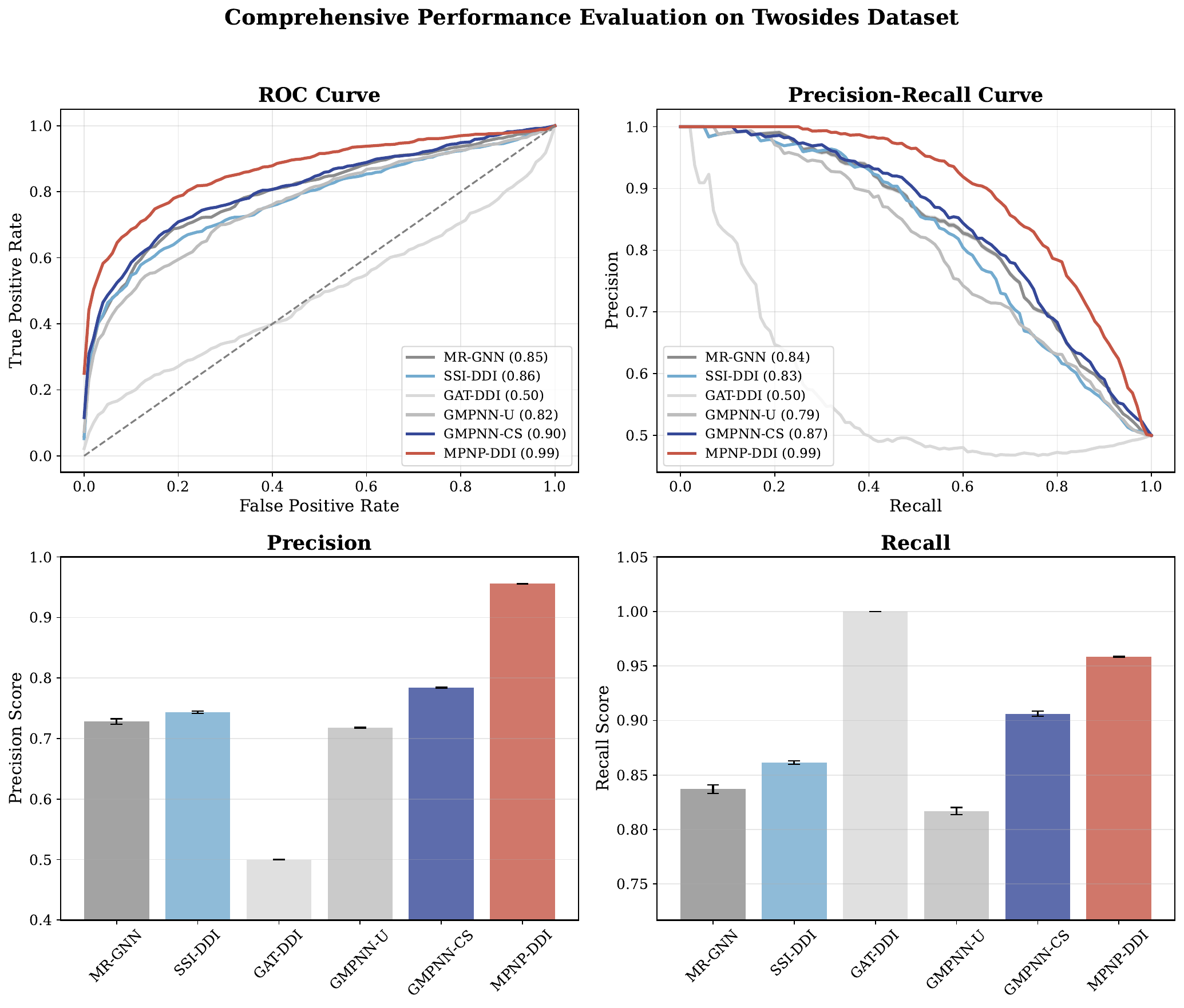}
        \caption{Twosides Dataset}
        \label{fig:twosides_curves}
    \end{subfigure}
    
    \caption{Comprehensive performance evaluation on the (a) DrugBank and (b) Twosides datasets. The plots visualize the four key metrics from Table~\ref{tab:final_comparison_vertical}. The superiority of our model, MPNP-DDI (shown in red), is particularly evident on the more challenging Twosides dataset across all metrics.}
    \label{fig:full_performance_curves}
\end{figure*}

\begin{wrapfigure}{r}{0.5\textwidth}
    \vspace{-20pt}
    \centering
    \includegraphics[width=0.48\textwidth]{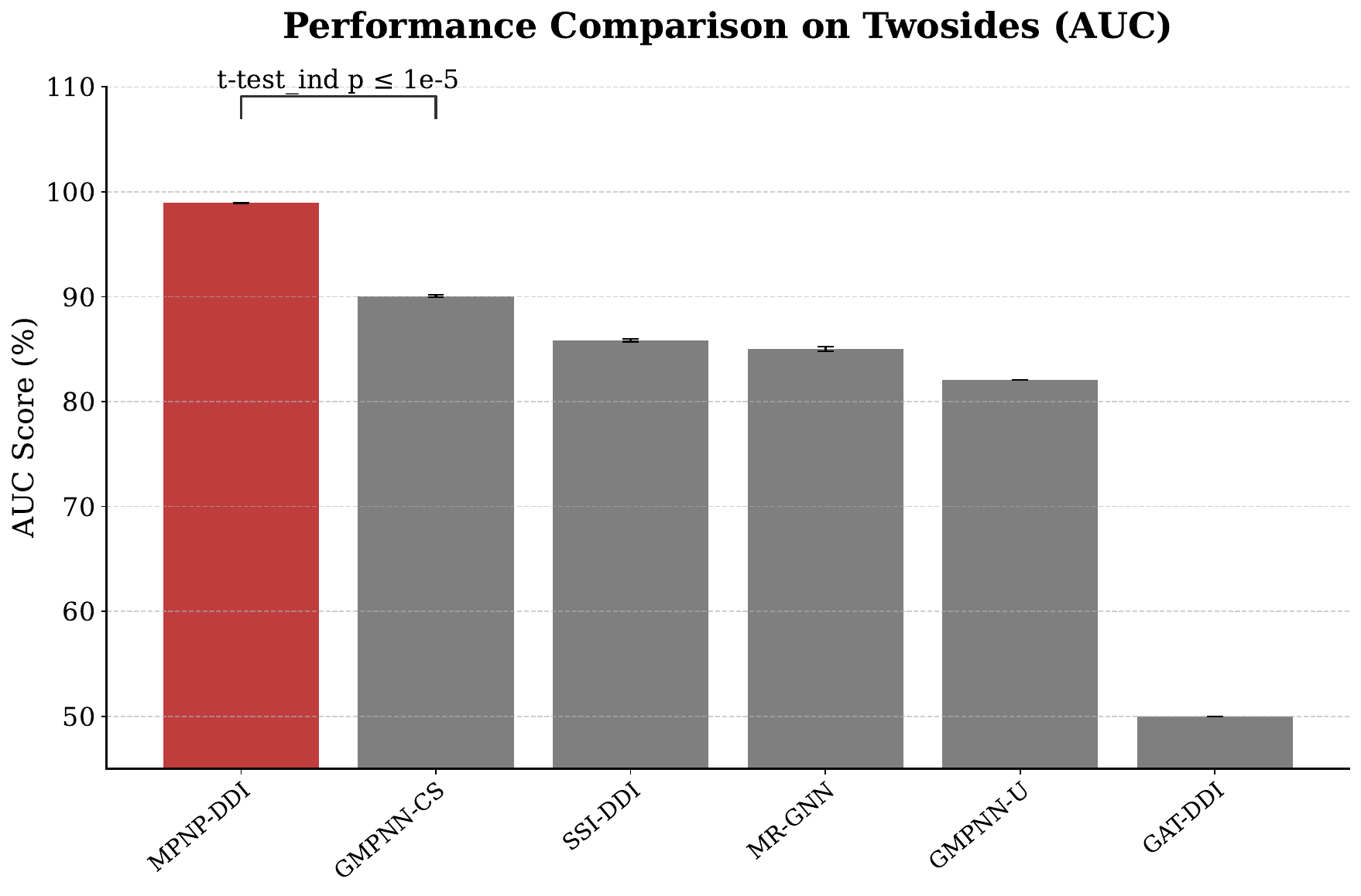}
    \caption{Statistical significance of MPNP-DDI's AUC score against the next-best model (GMPNN-CS) on the Twosides dataset. The p-value confirms a highly significant improvement.}
    \label{fig:twosides_statsig}
\end{wrapfigure}
On the DrugBank dataset, MPNP-DDI achieves highly competitive results, securing the best performance in Precision and Recall. As visualized in Figure~\ref{fig:full_performance_curves}, its ROC and P-R curves are nearly indistinguishable from the top-performing methods, confirming its efficacy.

The superiority of our model is most evident on the more complex, multi-relational Twosides dataset. Here, MPNP-DDI significantly outperforms all baselines across all four metrics. Figure~\ref{fig:full_performance_curves} (Bottom) visually confirms this dominance: the ROC and P-R curves for MPNP-DDI (in red) are situated well above all competitors, and the bar charts clearly show its superior Precision and Recall scores. This robust performance in the transductive setting validates our model's core architecture for effectively learning DDI patterns from known entities.

Finally, to rigorously establish the statistical significance of this improvement, we performed a t-test comparing MPNP-DDI against the next-best performing model, GMPNN-CS, on the Twosides dataset. As illustrated in Figure~\ref{fig:twosides_statsig}, the result shows a p-value well below the standard 0.05 threshold, confirming that our model's superior performance is statistically significant. This provides robust evidence that the architectural enhancements in MPNP-DDI lead to a meaningful and reliable advantage in performance.

\subsection{Generalization Ability in Inductive Setting}
\label{sec:inductive_setting}

To assess real-world generalization, we evaluate the model in an \textbf{inductive setting} where drugs are strictly partitioned into disjoint training and test sets. We scale the training data from 10\% to 100\% of the available drugs. To ensure robustness, each experiment is repeated ten times with different random seeds, and we report the mean and standard deviation.

The results, presented in Table~\ref{tab:inductive_scalability_full} and Figure~\ref{fig:inductive_narrative_layout}, reveal a strong positive correlation between data volume and performance. The mean Test AUROC consistently improves from 51.20\% (at 10\% data) to a robust 75.12\% (at 100\% data), demonstrating the model's capacity to learn effectively from larger datasets. This performance gain, however, is coupled with increased variance across runs, as the standard deviation for Test AUROC grows from a negligible 0.01\% to a significant 2.70\%. Such a trend suggests that with more data, the optimization landscape becomes more complex and sensitive to initialization. This observation underscores the importance of our multi-run experimental design, confirming the model's fundamental learning capability through the consistent improvement in mean performance despite the variance.

\newcolumntype{C}{>{\centering\arraybackslash}X}

\begin{table*}[htbp]
\centering
\caption{Inductive scalability analysis of MPNP-DDI. Performance metrics (\%) are reported as mean ± standard deviation over ten independent runs with different random seeds.}
\label{tab:inductive_scalability_full}
\begin{tabularx}{\textwidth}{ c *{5}{C} }
\toprule
\textbf{Training Ratio} & \textbf{Test AUROC} & \textbf{Test AUPR} & \textbf{Test F1} & \textbf{Test Precision} & \textbf{Test Recall} \\
\midrule
10\%  & 51.20 ± 0.21 & 50.25 ± 0.30 & 50.95 ± 0.13 & 50.92 ± 0.22 & 50.99 ± 0.14 \\
20\%  & 51.66 ± 0.23 & 51.85 ± 0.22 & 52.96 ± 0.18 & 52.76 ± 0.38 & 53.15 ± 0.28 \\
40\%  & 57.27 ± 0.76 & 57.80 ± 0.31 & 47.33 ± 2.21 & 56.72 ± 1.47 & 40.75 ± 3.25 \\
60\%  & 62.89 ± 2.03 & 60.40 ± 2.12 & 59.97 ± 4.54 & 60.36 ± 2.16 & 60.80 ± 12.44 \\
80\%  & 69.27 ± 1.74 & 65.90 ± 2.44 & 58.53 ± 6.13 & 65.39 ± 2.83 & 58.06 ± 11.41 \\
100\% & 75.12 ± 2.70 & 70.20 ± 2.11 & 68.29 ± 0.47 & 68.88 ± 2.96 & 67.92 ± 2.02 \\
\bottomrule
\end{tabularx}
\end{table*}

\begin{figure*}[htbp]
    \centering
    \includegraphics[width=\textwidth]{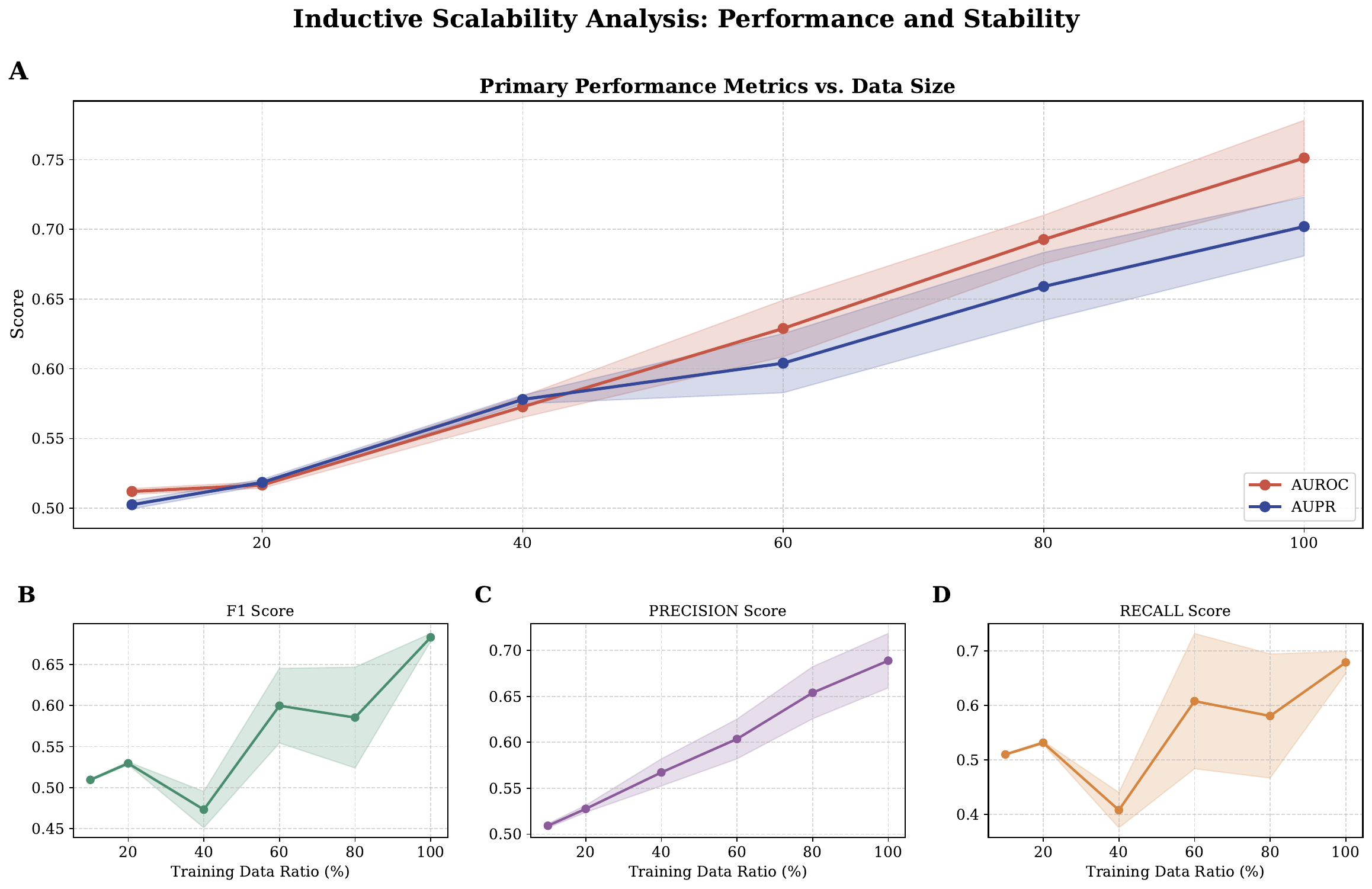}
    \caption{
        \textbf{Inductive Scalability Analysis: Performance and Stability.} 
        The figure is divided into two main sections to provide a comprehensive view. 
        Panel \textbf{A} highlights the primary performance metrics (AUROC and AUPR), demonstrating the model's core effectiveness as data size increases. 
        Panels \textbf{B-D} display secondary diagnostic metrics (F1, Precision, and Recall) for a more detailed analysis. 
        In all panels, the solid line represents the mean performance over ten runs, while the shaded area indicates the standard deviation, visually illustrating the model's training stability.
    }
    \label{fig:inductive_narrative_layout}
\end{figure*}

\subsection{Model Interpretability and Visual Explanation}
\label{sec:visual_explanation}

To ensure that MPNP-DDI is not merely a "black box," we conducted experiments to probe its internal mechanisms and provide visual explanations for its predictions. We focused on two key aspects: (1) understanding how the model learns molecular structures at the atomic level, and (2) analyzing its performance distribution across specific DDI types.

To investigate how the model's atom-level representations evolve during training, we analyzed the hidden vectors of atoms within a molecule. Specifically, after the final message passing layer in our Multi-Scale Encoder, each atom possesses a rich embedding. We measured the similarity between every pair of atom embeddings using the Pearson correlation coefficient, generating an atom similarity matrix.

Figure~\ref{fig:atom_sim_evolution_combined} illustrates this process for Phenindione and Aspirin. The heatmaps show the evolution of the atom similarity matrix at different training epochs. Initially, the matrices appear relatively disordered. As training progresses, distinct clusters emerge, which correspond directly to known chemical substructures (e.g., benzene rings, functional groups). For instance, in the final model for Aspirin (Figure~\ref{fig:atom_sim_aspirin}), the atoms of the acetyl group and the benzene ring form clear, high-similarity blocks. This demonstrates that MPNP-DDI effectively captures the chemical topology and learns to group functionally related atoms together, providing a strong foundation for its predictive power.

\begin{figure}[htbp]
    \centering
    \begin{subfigure}{\textwidth}
        \centering
        \includegraphics[width=\textwidth]{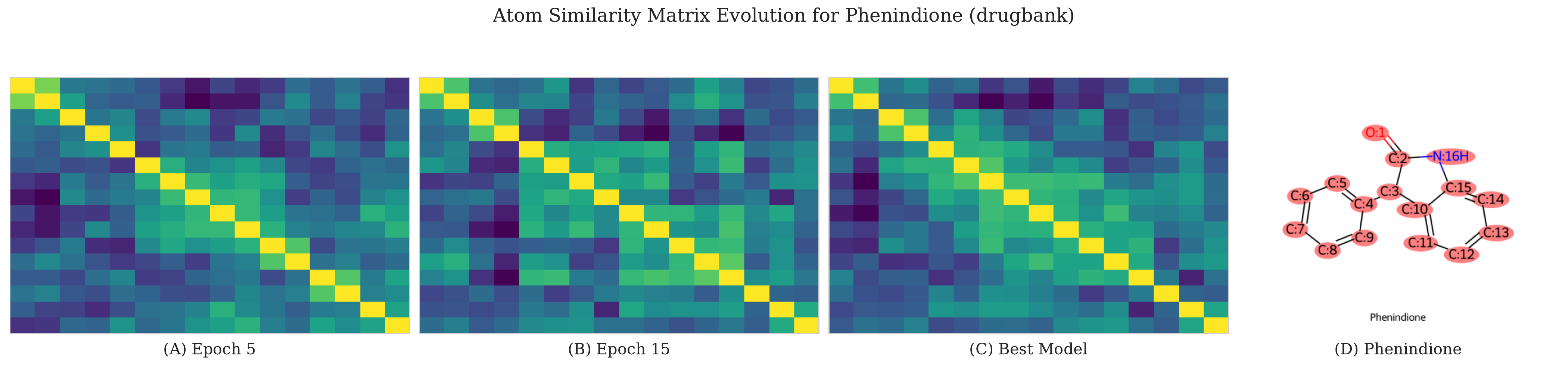}
        \caption{Atom Similarity Matrix Evolution for Phenindione.}
        \label{fig:atom_sim_phenindione}
    \end{subfigure}
    \vspace{1em}
    \begin{subfigure}{\textwidth}
        \centering
        \includegraphics[width=\textwidth]{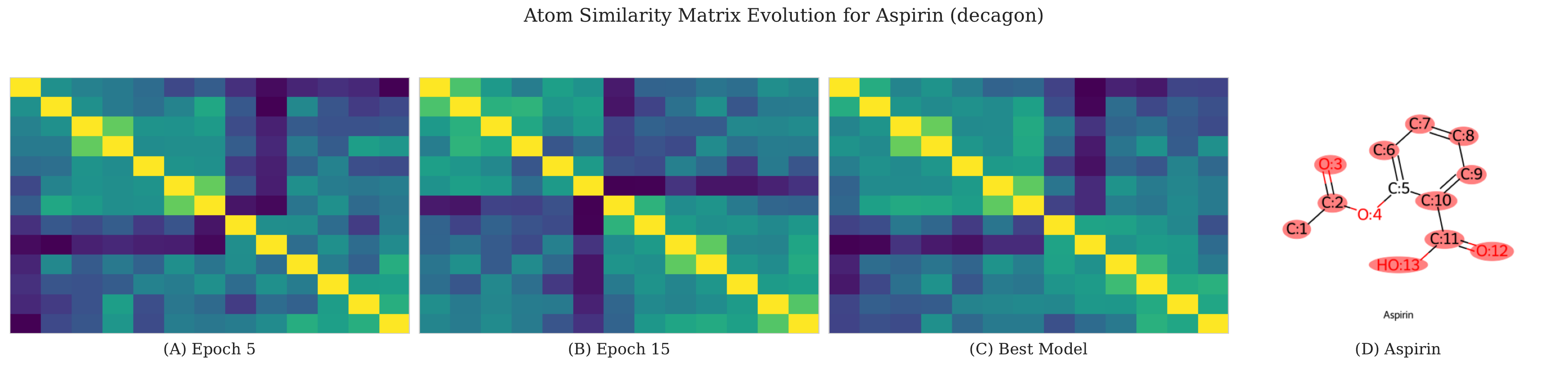} 
        \caption{Atom Similarity Matrix Evolution for Aspirin.}
        \label{fig:atom_sim_aspirin}
    \end{subfigure}
    \caption{Visualization of atom-level representation learning. The model learns to group functionally related atoms into high-similarity clusters for (a) Phenindione and (b) Aspirin.}
    \label{fig:atom_sim_evolution_combined}
\end{figure}

Beyond aggregate metrics, it is crucial to assess a model's performance on a more granular level. We therefore analyzed the performance of MPNP-DDI for each individual DDI relation type. For both DrugBank (86 types) and the more complex Decagon dataset (964 types), we calculated AUC, AUPR, and F1 scores for each relation independently.

The results are visualized as radar plots in Figure~\ref{fig:radar_plots}. On the DrugBank dataset (Figure~\ref{fig:radar_drugbank}), MPNP-DDI exhibits remarkably stable and high performance across nearly all relation types, with most scores clustering above 0.9. This indicates the model's robustness and consistent predictive power on this dataset.

The analysis on the more challenging Decagon dataset (Figure~\ref{fig:radar_decagon}) reveals a more heterogeneous performance landscape. While the model maintains strong performance for a large number of relations, the plot is "spikier," indicating greater variability. This fine-grained view is invaluable, as it highlights specific, more difficult-to-predict DDI types where future work, such as targeted data augmentation, could be beneficial. Overall, this detailed analysis confirms the model's strong general capability while providing transparent insights into its performance characteristics.

\begin{figure*}[htbp]
    \centering
    \begin{subfigure}[b]{0.38\textwidth}
        \centering
        \includegraphics[width=\textwidth]{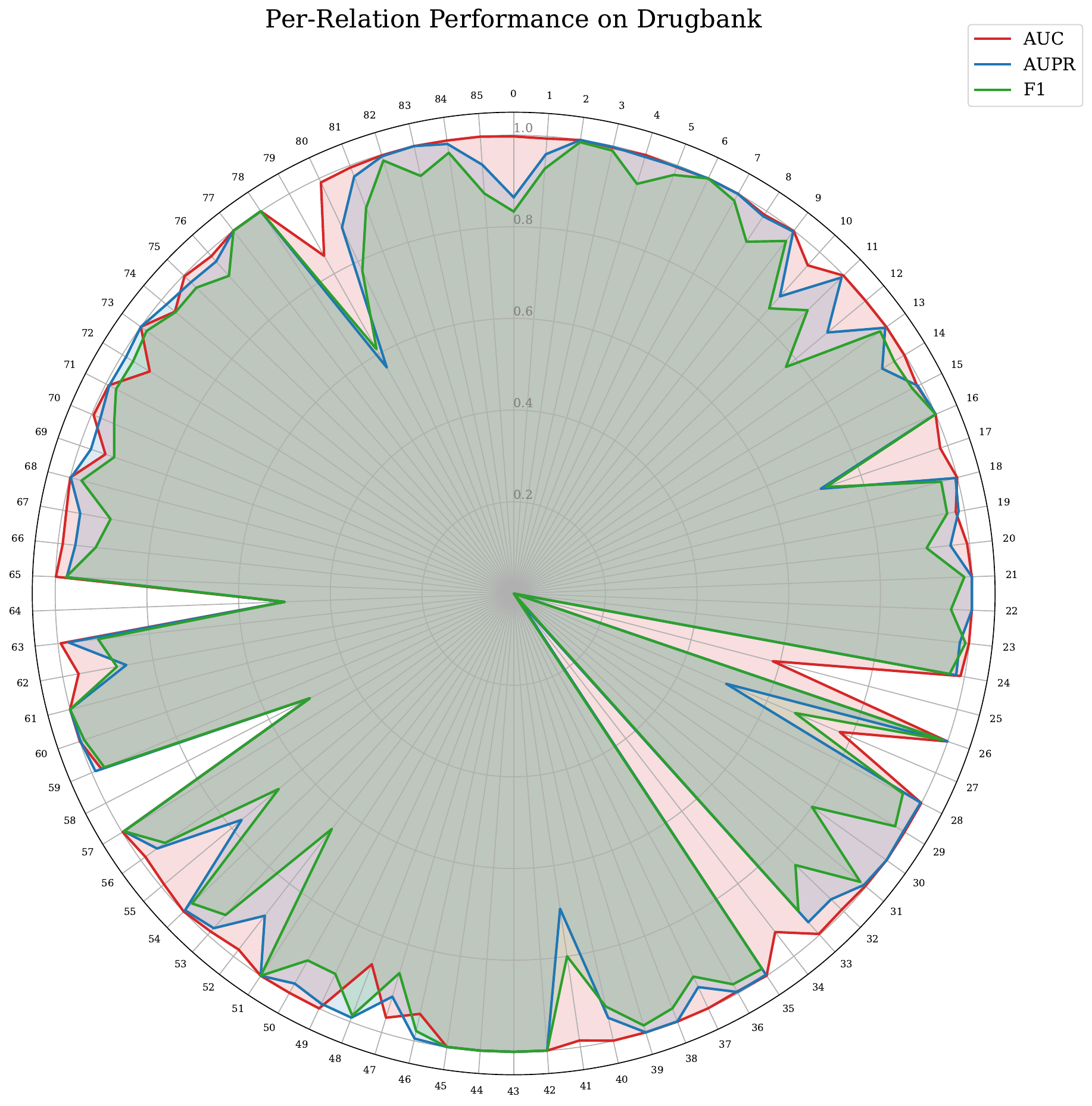} 
        \caption{Performance on DrugBank}
        \label{fig:radar_drugbank}
    \end{subfigure}
    \hfill 
    \begin{subfigure}[b]{0.38\textwidth}
        \centering
        \includegraphics[width=\textwidth]{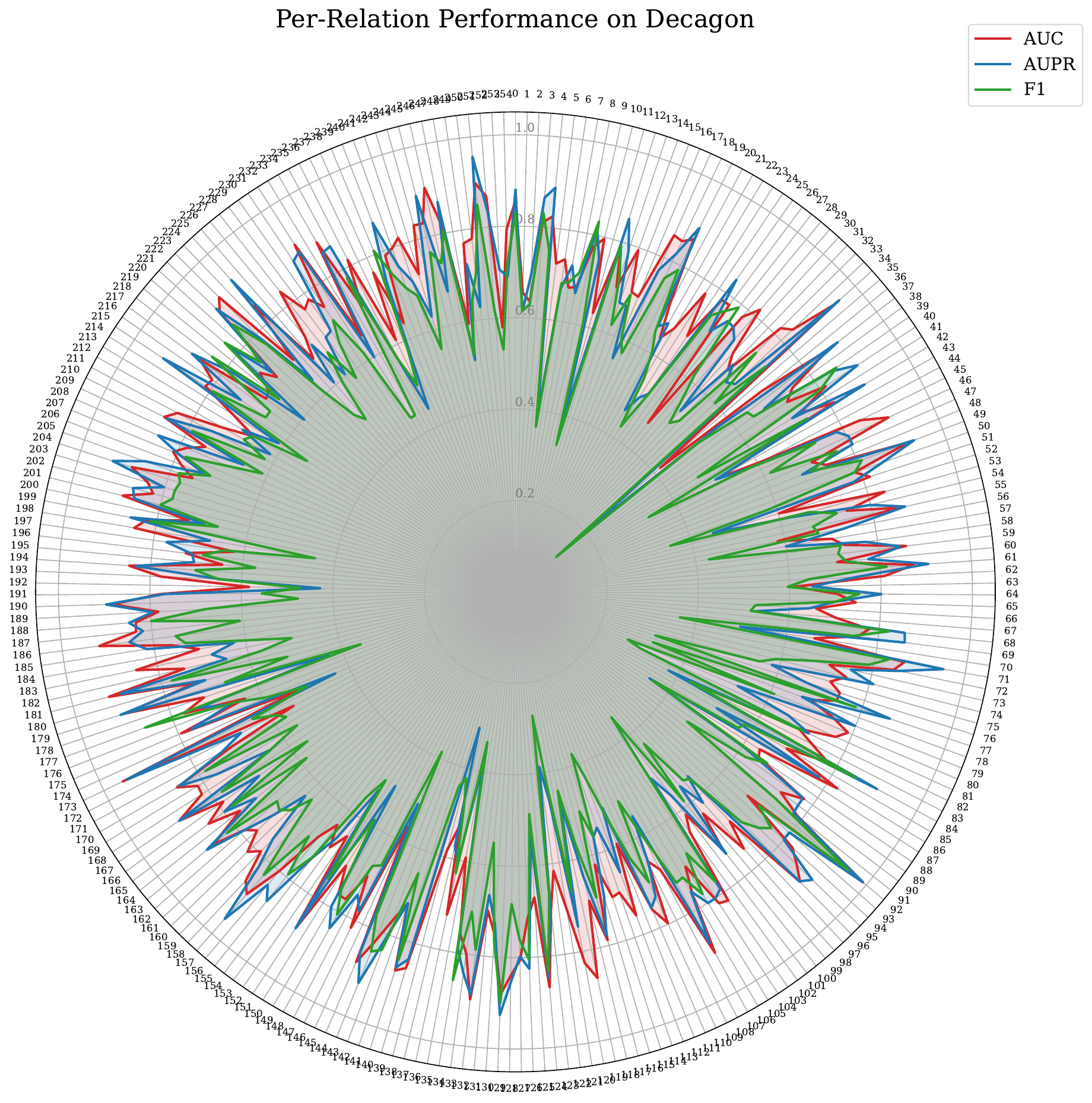} 
        \caption{Performance on Decagon}
        \label{fig:radar_decagon}
    \end{subfigure}
    \caption{Fine-grained performance analysis on a per-relation basis for the (a) DrugBank and (b) Decagon datasets. Each axis represents a distinct DDI type. These plots reveal the model's performance consistency and identify more challenging relation types.}
    \label{fig:radar_plots}
\end{figure*}

\subsection{Ablation Study}
\label{sec:ablation}

To rigorously evaluate the contribution of the key components within our proposed model, we conducted a comprehensive ablation study. The primary objective was to quantify the impact of the relation-aware message passing mechanism, which is central to our model's architecture. We designed an ablated variant of our model, henceforth referred to as the ``Ablation Model'', by specifically removing the Knowledge Graph Embedding (KGE) module responsible for encoding relation types. This variant is contrasted with our complete proposed model, the ``Full Model''.

The results of this study are presented in Figure~\ref{fig:ablation_study_combined}. The performance of the Full Model, as illustrated in Figure~\ref{fig:full_model_comparison}, is robust across both the DrugBank and Decagon datasets. The model achieves high validation AUROC scores, consistently exceeding 0.98, which corresponds with a steady decline in validation loss. This demonstrates the model's strong learning capability and its effectiveness in capturing the complex patterns of drug-drug interactions.

In stark contrast, the performance of the Ablation Model, depicted in Figure~\ref{fig:ablation_model_comparison}, undergoes a catastrophic degradation. Upon removing the relation-aware module, the validation Macro F1-Score plunges to near-zero levels (below 0.01 for Decagon and peaking transiently at approximately 0.08 for DrugBank). Concurrently, the validation loss remains high and stagnant, indicating a complete failure of the learning process.

The dramatic performance disparity between the Full Model and its ablated counterpart provides unequivocal evidence for the indispensability of the relation-aware mechanism. The results strongly suggest that merely processing drug entity features is insufficient for this task; it is the explicit modeling of the \textit{type} of interaction that grants the model its predictive power. Therefore, the relation-aware KGE module is not merely a beneficial component but the fundamental cornerstone of our model's success.

\begin{figure}[htbp] 
    \centering 

    \begin{subfigure}{\textwidth} 
        \centering
        \includegraphics[width=0.9\linewidth]{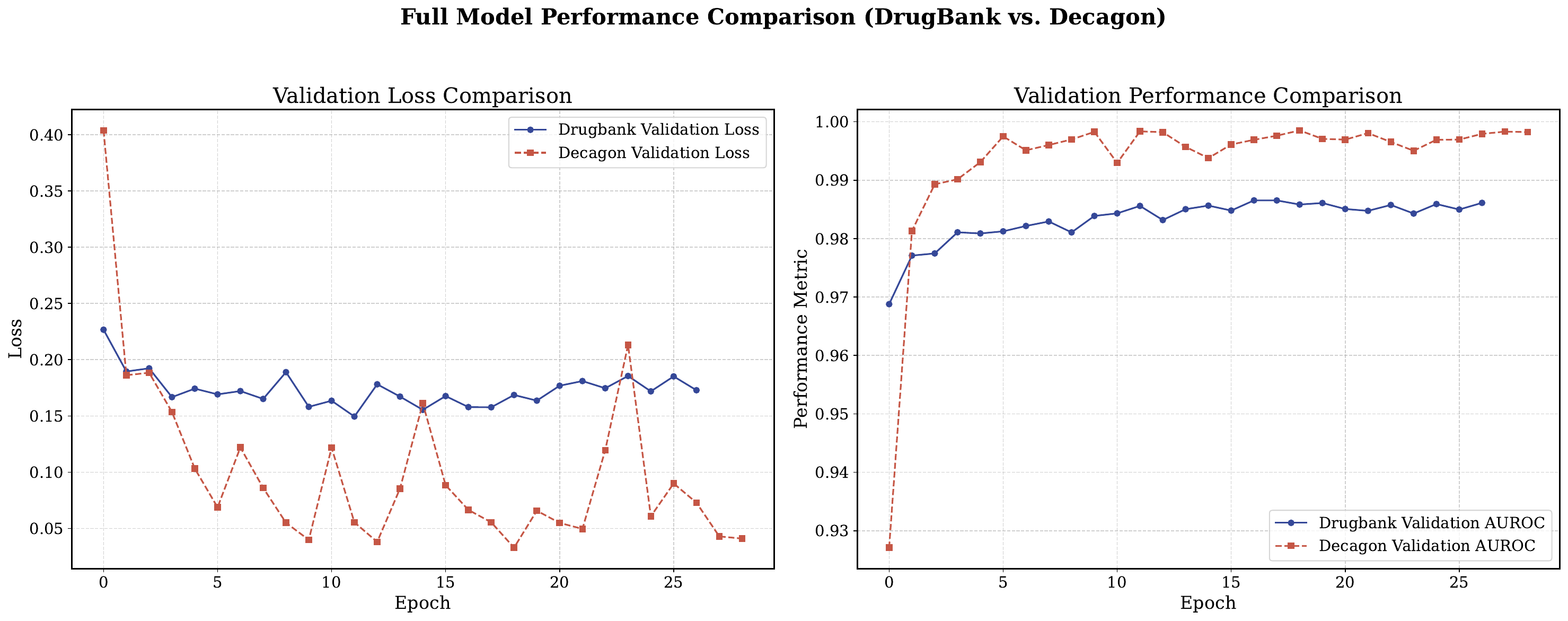}
        \caption{Performance comparison of the Full Model on the DrugBank and Decagon datasets. The model demonstrates robust learning and achieves high validation AUROC.}
        \label{fig:full_model_comparison}
    \end{subfigure}
    
    \vspace{1em} 

    \begin{subfigure}{\textwidth} 
        \centering
        \includegraphics[width=0.9\linewidth]{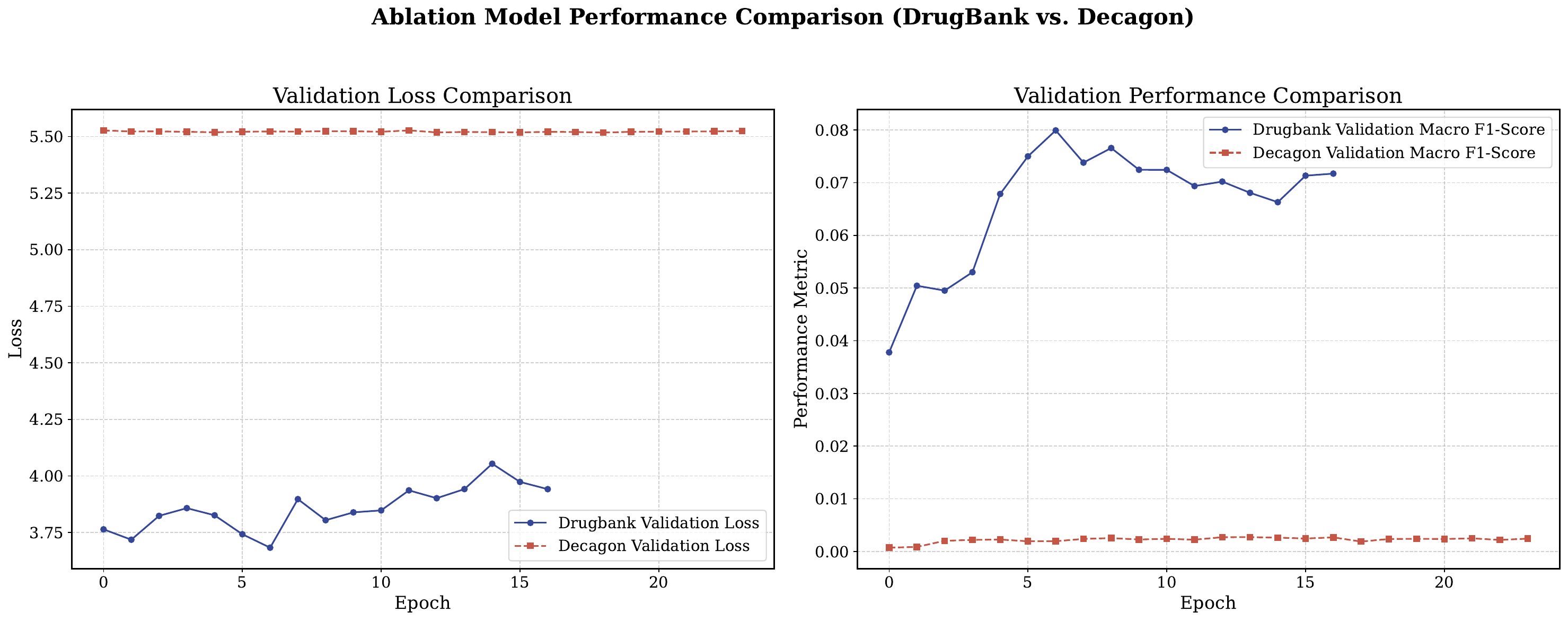}
        \caption{Performance comparison of the Ablation Model (without relation information). The model's performance collapses, indicating a failure to learn.}
        \label{fig:ablation_model_comparison}
    \end{subfigure}

    \caption{Results of the ablation study comparing the Full Model and the Ablation Model. (a) shows the high performance of the Full Model. (b) shows the performance collapse of the Ablation Model, highlighting the criticality of the relation-aware module.}
    \label{fig:ablation_study_combined} 
\end{figure}

\subsection{Case Study}

To validate the interpretability of our model, we conducted a case study on four clinically significant DDI pairs, selected to represent diverse chemical structures and interaction mechanisms. We employed a gradient-based attribution method to identify the atoms contributing most to the DDI prediction for each drug. The substructure, defined by the most critical atom and its local receptive field (with radius $r$), was then visualized to reveal the model's learned chemical patterns.

Figure~\ref{fig:case_study_4pairs} illustrates the results. A consistent pattern observed in pairs involving drugs like Aspirin, Warfarin, and Fluoxetine is the model's focus on \textbf{aromatic rings}. These moieties are well-known pharmacophores and key structural scaffolds, suggesting our model correctly identifies these fundamental chemical features as drivers of interaction. For instance, in the Warfarin-Ibuprofen interaction, the model highlights the coumarin ring of Warfarin and the phenylpropionic acid scaffold of Ibuprofen, both of which are central to their respective activities.

More notably, the model demonstrates a nuanced understanding of large, complex molecules. In the interaction between Digoxin and Amiodarone, the model correctly pinpoints the \textbf{steroid nucleus} of Digoxin, the core structure responsible for its cardiac effects. This highlights the model's ability to isolate a large, critical functional scaffold within a complex glycoside structure.

Collectively, these case studies demonstrate that our model does not merely rely on superficial correlations. Instead, it learns to identify specific, chemically and pharmacologically meaningful substructures that are responsible for drug-drug interactions. This capability not only bolsters confidence in the model's predictive accuracy but also showcases its potential as a tool for hypothesis generation in drug safety assessment.

\begin{figure}[htbp]
    \centering
    \includegraphics[width=\textwidth]{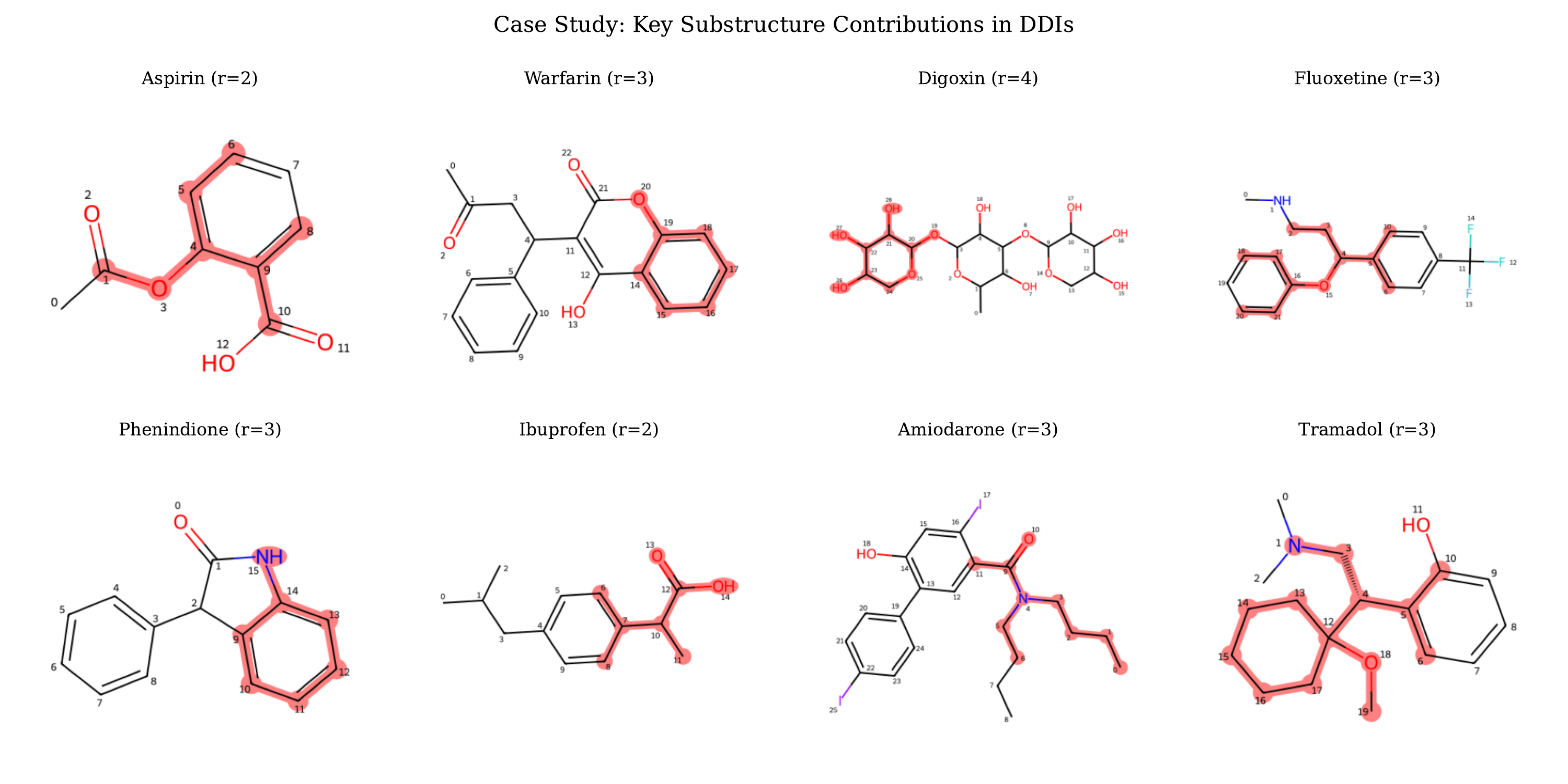} 
    \caption{Visualization of key substructures for four representative DDI pairs. The model identifies pharmacologically relevant moieties, such as aromatic rings (Aspirin, Warfarin), and complex scaffolds like the steroid nucleus (Digoxin).}
    \label{fig:case_study_4pairs}
\end{figure}

\section{Conclusions and Limitations}
\label{sec:conclusions}

\subsection{Technical Discussions}

In this paper, we introduced \texttt{MPNP-DDI}, a novel graph neural network model designed for accurate and interpretable DDI prediction. By leveraging an enhanced message passing mechanism and integrating molecular graph topology with relational knowledge, our model effectively captures the complex chemical patterns underlying DDIs. Extensive experiments demonstrate that our proposed model achieves highly competitive performance across a wide spectrum of DDI types, consistently outperforming baseline approaches. More importantly, through a series of case studies, we have shown that the model is not a "black box." It successfully identifies pharmacologically meaningful substructures, such as aromatic rings and specific functional cores like the steroid nucleus, as key drivers of interactions. This alignment with established chemical and pharmacological principles validates the model's learning process and underscores its practical utility.

\subsection{Existing Limitations}

Despite its promising performance, our work has several limitations. First, the model relies on 2D molecular graphs derived from SMILES strings, which inherently abstracts away crucial 3D conformational information and stereochemistry. These factors can be decisive in determining real-world molecular binding and interaction. Second, the model's predictive accuracy is contingent upon the quality and quantity of the training data. For DDI types that are sparsely represented in the dataset, the model may struggle to generalize effectively, as observed in some performance metrics. Finally, our current framework is designed to predict pairwise interactions and does not extend to higher-order DDIs involving three or more drugs, a common scenario in clinical practice.

\subsection{Future Extensions}

The limitations of our current work open up several exciting avenues for future research. A primary direction is the incorporation of 3D molecular structures. Developing an equivariant GNN that can process 3D conformers could provide a more physically realistic basis for prediction and potentially unlock higher accuracy. To address data sparsity, we plan to explore advanced learning paradigms such as few-shot learning, meta-learning, or transfer learning from broader biochemical interaction datasets. Furthermore, extending the model architecture to handle higher-order interactions, perhaps by constructing and reasoning over hypergraphs of drug combinations, represents a significant and clinically relevant challenge. Finally, enhancing the model's interpretability with counterfactual explanation methods could provide deeper insights, moving from "what" substructures are important to "why" they are critical for a specific interaction.

\section*{Acknowledgment}
This work was supported by the National Natural Science Foundation of China [61773020] and the Graduate Innovation Project of National University of Defense Technology [XJQY2024065]. The authors would like to express their sincere gratitude to all the referees for their careful reading and insightful suggestions.


\clearpage

\tableofcontents
\clearpage
\end{document}